\newtheorem{theorem}{Theorem}
\newtheorem{corollary}[theorem]{Corollary}
\def\BibTeX{{\rm B\kern-.05em{\sc i\kern-.025em b}\kern-.08em
    T\kern-.1667em\lower.7ex\hbox{E}\kern-.125emX}}
\begin{document}
\def\UrlBreaks{\do\/\do-}

\title{
ACE -- An Anomaly Contribution Explainer \\ for  Cyber-Security Applications
}

\makeatletter
\newcommand{\linebreakand}{%
  \end{@IEEEauthorhalign}
  \hfill\mbox{}\par
  \mbox{}\hfill\begin{@IEEEauthorhalign}
}
\makeatother

\author{\IEEEauthorblockN{Xiao Zhang\thanks{Work done during an internship at \textbf{HPE Software} (now \textbf{Micro Focus}).}}
\IEEEauthorblockA{\textit{Department of CS, Purdue University} \\
USA \\
zhang923@purdue.edu}
\and
\IEEEauthorblockN{Manish Marwah}
\IEEEauthorblockA{\textit{Micro Focus} \\
USA \\
manish.marwah@microfocus.com}
\and
\IEEEauthorblockN{I-ta Lee}
\IEEEauthorblockA{\textit{Department of CS, Purdue University} \\
USA \\
lee2226@cs.purdue.edu}
\linebreakand
\IEEEauthorblockN{Martin Arlitt}
\IEEEauthorblockA{\textit{Micro Focus} \\
Canada \\
martin.arlitt@microfocus.com}
\and
\IEEEauthorblockN{Dan Goldwasser}
\IEEEauthorblockA{\textit{Department of CS, Purdue University} \\
USA \\
dgoldwas@cs.purdue.edu}
}

\maketitle

\begin{abstract}
In this paper we introduce \textit{Anomaly Contribution Explainer} or \textit{ACE}, a tool to explain security anomaly detection models in terms of the model features through a regression framework, and its variant, \textit{ACE-KL}, which highlights the important anomaly contributors.  ACE and ACE-KL provide insights in diagnosing which attributes significantly contribute to an anomaly by building a specialized linear model to locally approximate the anomaly score that a black-box model generates. We conducted experiments with these anomaly detection models to detect security anomalies on both synthetic data and real data.  In particular, we evaluate performance on three public data sets: CERT insider threat, netflow logs, and Android malware. The experimental results are encouraging: our methods consistently identify the correct contributing feature in the synthetic data where ground truth is available; similarly, for real data sets, our methods point a security analyst in the direction of the underlying causes of an anomaly, including in one case leading to the discovery of previously overlooked network scanning activity. We have made our source code publicly available.
\end{abstract}

\begin{IEEEkeywords}
anomaly detection, model explanation, model interpretability, cyber-security
\end{IEEEkeywords}

\section{Introduction}

Cyber-security is a key concern for both private and public
organizations, given the high cost of security compromises and attacks; 
malicious cyber-activity cost the U.S. economy between \$57 billion
and \$109 billion in 2016 \cite{whitehousereport}. As a result,
spending on security research and development, and security products
and services to detect and combat cyber-attacks has been increasing
\cite{forbesreport}.

Organizations produce large amounts of network, host and application
data that can be used to gain insights into cyber-security threats,
misconfigurations, and network operations.  While security domain
experts can manually sift through some amount of data to spot attacks
and understand them, it is virtually impossible to do so at scale,
considering that even a medium sized enterprise can produce terabytes
of data in a few hours. Thus there is a need to automate the process
of detecting security threats and attacks, which can more generally be
referred to as security anomalies.

Major approaches to detect such anomalies fall into two
broad categories: human expert driven (mostly rules-based) and machine
learning based (mostly unsupervised) \cite{Veeramachaneni2016}.  The
first approach involves codifying domain expertise into rules, for
example, if the number of login attempts exceeds a threshold, or more
than a threshold number of bytes are transferred during the night, and so
on.  While rules formulated by security experts are useful, they are
ineffective against new (zero-day) and evolving attacks; furthermore,
they are brittle and difficult to maintain.  On the other hand,
enabled by the vast amounts of data collected in modern enterprises,
machine learning based approaches have become the preferred choice for
detecting security anomalies. 

The machine learning models to detect security anomalies typically
output a severity or anomaly score; this score allows ranking and
prioritization of the anomalies. A security analyst can then further
investigate these anomalies to understand their root causes, if they
are true positives, and if any remedial action is required.  However,
anomaly detectors typically do not provide any assistance in this
process. In fact, any direction or pointers in terms of features, or
groups of features, responsible for a high anomaly score would allow
prioritization of causes to look at first and thus save an analyst's
time and effort; this would help even though the information may not
directly reveal the root cause of an anomaly.  For example, based on
the contributions an anomaly detector assigns to features related to
external traffic volume, number of ports active, number of external
hosts, etc, analysts would decide the course of their investigation into
the underlying causes of that particular anomaly.

However, most anomaly detection models are black-boxes that output an
anomaly score without any associated explanation or reasoning. In
fact, there is an inverse relationship between building complex models
that can make accurate predictions and explaining these predictions in
a human-interpretable way. For example, explaining the predictions of
simpler models, such as linear regression, logistic regression or
decision trees, is considerably easier compared to complex models such
as random forests or deep neural networks, which build complex
non-linear relationships between the input and the predicted output.

As a result, when models that can explain their output are needed, as
is often the case, for example, in medical diagnosis (a doctor needs
to provide a detailed explanation of the diagnosis to the patients
\cite{Caruana2015}), or credit card application (an explanation of why
or why not a particular application is approved is usually required
\cite{Shi2012}), simpler models are preferred. 
However, interpretability comes at a cost since in most instances
complex models tend to have higher accuracy. Therefore, there is an
unavoidable trade-off between model interpretability and model
accuracy. Recently deep learning models are being successfully
used for cyber-security applications \cite{Tuor2017,
  yousefi2017autoencoder, berman2019survey, cui2018detection}. In fact,
a part of the focus of a recently organized workshop is the application of
deep learning to security \cite{dls}.

In this paper, we focus on explaining the outputs of complex models in
the cyber-security anomaly detection domain, where outputs are usually
anomaly scores.  We propose ACE -- \textit{A}nomaly \textit{C}ontribution 
\textit{E}xplainer, to bridge the gap between the predictions 
provided by an anomaly detection model and
the interpretation required to support human intervention in realistic
applications. Specifically, ACE provides explanations, in
terms of the features' contributions, by building a specialized linear 
model to locally approximate the anomaly score that a black-box anomaly 
detection model generates.
These explanations aid a security analyst to quickly
diagnose the reported anomaly. Our source code is publicly available\footnote{Source code available at
  \url{https://github.com/cosmozhang/ACE-KL}}.

Our key contributions are:
\begin{itemize}
\item We design and implement two methods, ACE and ACE-KL, for explaining scores of individual anomalies detected by black-box models in terms of feature contributions.
\item We validate our methods on three data sets: 1) a synthetically generated insider threat data set; 2) a real world netflow data set; and 3) a real world Android malware data set. In all of these cases, the results are encouraging and improved upon a recent work \cite{Ribeiro2016a} as a baseline.
\end{itemize}

The high-level overview of our approach is shown in Figure~\ref{fig:overview}, and we focus on the meta-model approximation to explain the score of a particular anomaly.

\begin{figure}[!htb]
\centering
\includegraphics[width=0.48\textwidth]{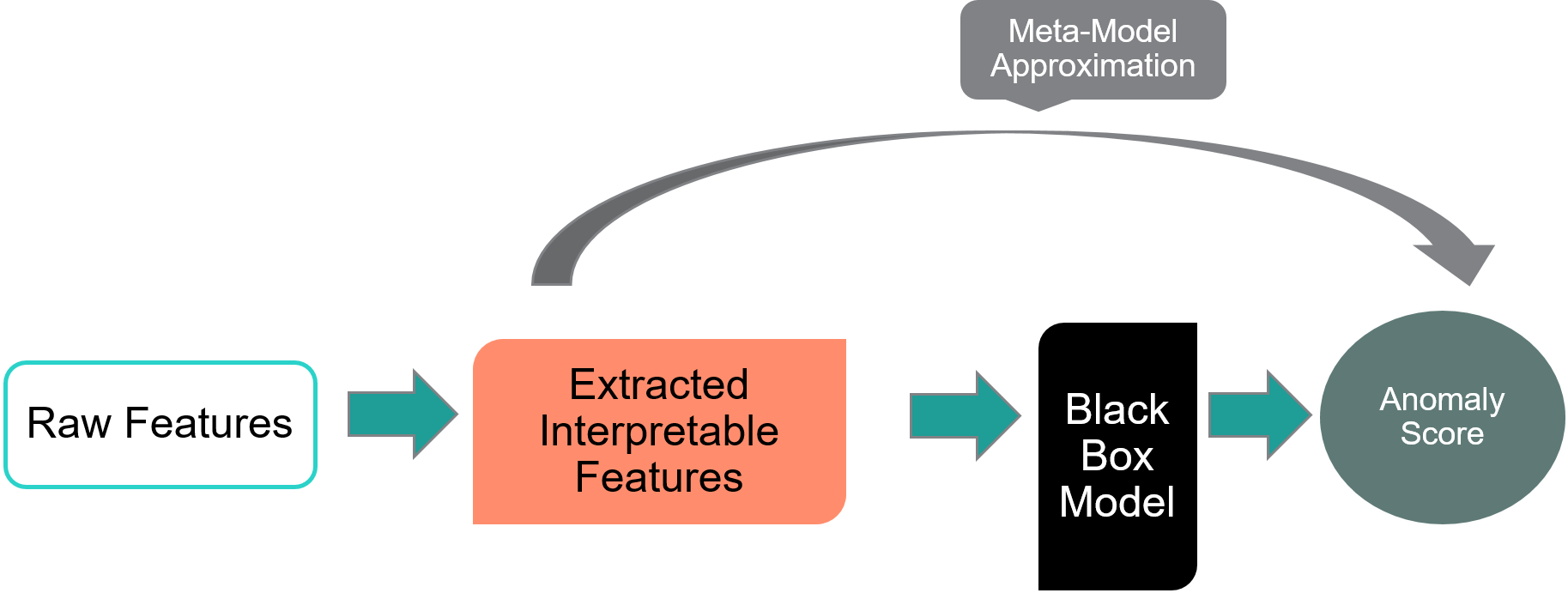}
\caption{Overview of the anomaly detection and interpretation workflow for the ACE or ACE-KL meta-model.}
\label{fig:overview}
\end{figure}

\section{Related Work}
While model interpretability and explanation have a long history \cite{biranSurvey2017,doshi2017,Lipton2018}, the recent success and rise in popularity of complex machine learning models (such as deep neural networks) has led to a surge of interest in model interpretability, as these complex models offer no explanation of their output predictions. 
Given the extensive literature on this subject, we only discuss work most related to ours; Guidotti et al.\cite{GuidottiSurvey2018} provide a comprehensive survey on explainability.   

Methods for generating explanations from complex models fall into two main categories: 1) model-specific ~\cite{suermondt1992,feraud2002methodology,robnik2011,landecker2013,martens2008}, which exploit a model's internal structure and as a result only work for that specific model type; and 2) model-agnostic \cite{robnik2008,kononenko2013,baehrens2010} or black-box methods, which do not depend on the underlying model type. Our work belongs to the second category. 

Several model-agnostic methods investigate the sensitivity of output with respect to the inputs to explain the output: An early attempt called \textit{ExplainD} used additive models to weight the importance of features with a graphical explanation  \cite{Poulin2006}. In Strumbelj et al.'s 
work \cite{Strumbelj2010}, the authors exploited notions from coalitional game theory to explain the contribution of value of different individual features. \textit{LIME}\cite{Ribeiro2016a}, designed for classification problems, was built to explain a new data point after a black-box model is trained. \textit{MFI} \cite{Vidovic2016} is a non-linear method able to detect features impacting the prediction through their interaction with other features. More recently, \textit{LORE} \cite{Guidotti2018} used a synthetic neighbourhood generated through a genetic algorithm to explain the feature importance and \textit{SHAP} \cite{Lundberg2017} assigns each feature
an importance value for a particular prediction. Our proposed methods belong to this category, and are closest to LIME \cite{Ribeiro2016a}.

Anomaly detection is widely studied and an important topic in data mining. However, explanation of the detected anomalies has received relatively little attention from researchers. For instance, one of the most widely cited surveys on anomaly detection \cite{chandola2009} makes no reference to explainability. Research on anomaly detection and explainability includes: work on anomaly localization \cite{Hara2015,Jiang2011}, which refers to the task of identifying faulty sensors from a population of sensors; feature selection or importance \cite{Hara2017}; estimating model sensitivity \cite{Woodall2003}; and method specific techniques \cite{Hirose2009,Tsuyoshi2009}. 
Despite their advantages, these methods are either tailored too closely for specific anomaly detection methods, or only consider sensitivity of inputs, not their entire contribution, and are not suitable for the security domain where anomalies and methods to detect evolve rapidly. 

\section{Methods}

\subsection{Problem Statement}

Formally, the model explanation problem in the context of anomaly detection can be stated as follows:

Given 1) a black-box anomaly detection model $f$, an arbitrary function with an input $\bm{x}$ having $M$ features: $x^1,\dots,x^M$, which outputs an anomaly score $As$, that is, 
$
f: \bm{x} \rightarrow As,
$
where $ As \in [0, \infty] $ is a scalar; and 2) a random data point $\bm{x}'$ that produces score $As'$, the goal is to estimate $c^1,\dots, c^M$, the normalized contributions of
each feature of $\bm{x}'$. Note that a $c^i$ may be zero if it does not contribute to an anomaly. 


\subsection{Assumptions and Observations}

We assume the output of the anomaly detector is an anomaly score in the range $[0, \infty]$, with 0 indicating no anomaly, and the score monotonically increasing with the severity of the anomaly. Such a score is widely used in anomaly detectors. Note that if an anomaly detector outputs an anomaly probability, $P_A(X)$, it is easy to convert it to such a score by the transformation: $As = -\log(1- P_A(X))$.

A careful study of existing techniques such as LIME \cite{Ribeiro2016a}  reveals their unsuitability for anomaly explanation.
These methods can only explain the importance of a feature locally, but not the whole contribution of that feature. For example, consider a linear regression scenario, i.e., $ \sum\limits_{i=1}^{M}x^i\cdot w^i = As$, where $ x^i $ is the $i$th feature of the vector (here we also encode the bias term $b$ as $w^M$, and the corresponding $x^M$ is always $ 1 $).  Assume a given feature $ x^a $ is of ``small importance'', determined by its non-significant corresponding weight $ w^a $, (assuming $ w^a $ is a value close to $ 0.01 $). However, if $ x^a $ is extremely large in a new example, for instance, $ 1000 $, and $ As=50 $, the multiplication of $ x^a $ and $ w^a $ still makes a large contribution to the predicted value $ As $. 

This observation has practical implications in anomaly detection problems, especially in security-related problems. For example, when a feature tends to appear in some range of values in training, a trained black-box model will weigh it accordingly. After the well-trained model is deployed, a new attack prototype can evolve focusing on specific attributes, which were neglected at training time, but now takes high attribute values. Even if the anomaly may be detected by a well trained black-box model as it results in high output scores, the underlying reason might escape the security analysts' attention. 


\subsection{Anomaly Contribution Explainer (ACE)}


ACE explains a specific anomaly by computing its feature contribution vector obtained through a local linear approximation of the anomaly score. Using this simple approximation, the real contribution that $i$th feature $ x^i $ makes to form $ As $ is naturally $ x^i\cdot w^i $. However, it is possible some $ x^i\cdot w^i $ are negative. These terms correspond to features that negatively impact an anomaly, and thus cannot be its cause. We want to discard these terms and focus on the features positively contributing to an anomaly. Therefore, we use the ``softplus'' function \cite{Dugas2000}, which is a ``smoothed'' \textit{relu} function to model the contribution of $ x^i\cdot w^i $ towards the entire anomaly. The intuition behind this choice is evident: we calculate the contribution by neglecting the negative components while considering the positive part linearly; this function forces all negative components to $0$ and retains all the positive components linear to their original value; further, the convexity of this function simplifies the computation.

We define $As$ as the anomaly score, calculated by the blackbox model. Further, to normalize all of the contributions towards the anomaly score, by denoting the normalized contribution of feature $ i $ as $ c^i $, we formally define the normalized contribution (``contribution" thereafter) of each feature as 
\par\nobreak{\small
\begin{equation}
c^{i}=\dfrac{\log(1+e^{ x^i\cdot w^i})}{\sum\limits_{j=1}^{M}\log(1+e^{x^i\cdot w^i})}.
\label{cal_contribution}
\end{equation}
}

To approximate a particular anomaly score $As$ generated by a black-box model at a point $\bm{x}$ of interest, we form the loss function with a modified linear regression, by sampling the neighborhood of $\bm{x}$ to obtain $N$ neighbors and obtaining their corresponding $N$ anomaly scores:
\begin{equation}
loss = \dfrac{1}{N}\sum\limits_{j=1}^{N} \pi_{\bm{x}}(\bm{x}_{j})\cdot(\bm{w}^{\intercal}\bm{x}_{j}-As_{j})^{2} + \alpha ||\bm{w}||^{2}_{2}\notag,
\end{equation}
where $ As_{j} $ is the anomaly score generated by a black-box model for the $ j $th neighbor, $ \alpha $ set to be $1$ in this study is the hyper parameter that controls the $ L2 $ norm regularizer, and  $ \pi_{\bm{x}}(\bm{x}_{j}) $ is the weight calculated by a distance kernel for the $ j $th neighbor. 
The parameters are estimated by minimizing the loss function, using the neighbourhood of the original example formed through sampling. Based on the fact that this neighbourhood is close enough to the point of intersection between the surface and the tangent plane, we use this neighbourhood to approximate the tangent plane, which is the linear regression.
We choose the normal distribution $ \mathcal{N}(\bm{x}, 0.01\mathbf{I}) $, where $ \mathbf{I} $ is an identity matrix for continuous features, as the neighborhood region to ensure the samples are close enough to the examined point; and a $\operatorname{Bern} \left(\bm{0.1}\right)$ distribution to flip the value for binary features $\bm{x} \in \{0, 1\}^{M}$ for the same reason. A distance kernel $ \pi_{\bm{x}} $ is used to calculate the distance between the examined point and the neighbors as such: $ \pi_{\bm{x}}(\bm{x}_{j}) = \exp(-D(\bm{x}, \bm{x}_{j})^{2}/\sigma^{2}) $, where $ D(\bm{x}, \bm{x}_{j}) $ is the distance between the original point $ \bm{x} $ and the neighbor $ \bm{x}_{j} $, which in our study was used as the Euclidean distance.$ \sigma $ is a pre-defined kernel width; here we use $ 0.75\times \sqrt{M} $. Thus, the larger the distance, the smaller the weight of that neighbor in parameter estimation, and vice versa. The overview of this approach is shown in Algorithm~\ref{ace}.
\begin{algorithm}[!htb]
\caption{Anomaly Contribution Explainer (ACE)}
\label{ace}
\begin{algorithmic}[1]
\Require{black-box model \textit{f}, Number of neighbors \textit{N}}
\Require{The sample \textit{x} to be examined}
\Require{Distance kernel $ \pi_{\bm{x}} $, Number of feature \textit{K}} \Comment{$ \pi_{\bm{x}} $ measures the distance between a sample and $ \bm{x} $, which is used as the inverse weight}
\State{$ \mathcal{Z} \leftarrow \{\} $}
\For{$ j \in \{1, 2, 3, \dots, N\} $}
\If{$\bm{x} \in \mathbb{R}^{M}$}
\State{$ \bm{x}_{j} \leftarrow \textit{sample\_from}\, \mathcal{N}(\bm{x}, 0.01) $} \Comment{$ \mathcal{N} $ is a normal distribution}
\ElsIf{$\bm{x} \in \{-1,1\}^{M}$}
\State{$\bm{x}_{j} \leftarrow flip(\bm{x})\_with\,\operatorname{Bern} \left(\bm{0.1}\right)$} \Comment{$ \operatorname{Bern} $ is a Bernoulli distribution}
\EndIf
\State{$As_{j} \leftarrow \textit{f}(\bm{x}_{j})$}
\State{$ \mathcal{Z} \leftarrow \mathcal{Z}\cup \langle \bm{x}_{j}, \textit{f}(\bm{x}_{j}), \pi_{\bm{x}}(\bm{x}_{j}) \rangle $}
\EndFor
\State{$ \bm{w} \leftarrow \min\limits_{\bm{w}}\,loss(\mathcal{Z}) $} 
\State{Compute and sort $ w^{i}_{j}\cdot x^{i}_{j} $ for each $ i $} \Comment{$ i $ is the index for $ i $th feature}
\State{Pick the top $ K $ from the sorted results and calculate the contribution (Eq. \ref{cal_contribution})}
\end{algorithmic}
\end{algorithm}
\vspace{0em}

\subsection{Anomaly Contribution Explainer with KL Regularizer (ACE-KL)}

The ACE-KL model extends the ACE model by adding an additional regularizer. This regularizer tries to maximize the KL divergence between a uniform distribution and the calculated distribution of contributions of all the inspected features. By adding this regularizer to our loss function, our anomaly contribution explainer assigns contribution to inspected features in a more distinguishable way, inducing more contributions from the dominant ones and reducing the contributions from those less dominant ones. The KL divergence between a uniform distribution and a particular distribution takes the following form: 
\par\nobreak{\small
\begin{equation}
KL(P||Q)= \sum_{i}^{M}P(i)\log \dfrac{P(i)}{Q(i)},\quad P(i) \sim Uniform,
\end{equation}
}
where $ P(i) $ is the uniform distribution and $ Q(i) $ is the calculated distribution.

Hence, the loss function is formalized as following:
\begin{align*}
loss = &\dfrac{1}{N}\sum\limits_{j=1}^{N} \pi_{\bm{x}}(\bm{x}_{j})\cdot(\bm{w}^{\intercal}\bm{x}_{j}-As_{j})^{2}\\
&+ \alpha ||\bm{w}||^{2}_{2} - \beta KL_{j}(P||Q),\nonumber
\end{align*}
where $ \beta $ set to $50$ in this study is the hyper parameter to control the KL regularizer. This formulation forces the calculated distribution to be peaky. Therefore, in terms of contributions, those features that contribute most get better explained than others. Intuitively, this characteristic yields a better visualization for security analysts in real applications. 

Further, a merit that our ACE-KL model retains is that the new loss function is still a convex function. We sketch the proof by taking advantage of the \textit{Scalar Composition Theorem} \cite{Boyd2004}:
\begin{corollary}
The loss function of ACE-KL model is a convex function, w.r.t. its model parameters.
\end{corollary}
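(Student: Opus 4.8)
The plan is to use the fact that a sum of convex functions is convex, so it suffices to establish convexity of each additive piece of the loss in $\bm{w}$ separately. I would write
\[
loss = \underbrace{\tfrac{1}{N}\sum_{j=1}^{N}\pi_{\bm{x}}(\bm{x}_{j})(\bm{w}^{\intercal}\bm{x}_{j}-As_{j})^{2}}_{(A)} + \underbrace{\alpha\|\bm{w}\|_{2}^{2}}_{(B)} + \underbrace{\bigl(-\beta\,KL(P\|Q)\bigr)}_{(C)},
\]
and dispose of $(A)$ and $(B)$ first, since these are routine. For $(A)$, each summand is the square of the affine map $\bm{w}\mapsto\bm{w}^{\intercal}\bm{x}_{j}-As_{j}$; as $t\mapsto t^{2}$ is convex and precomposition with an affine map preserves convexity, every summand is convex, and the nonnegative kernel weights $\pi_{\bm{x}}(\bm{x}_{j})\ge 0$ keep the average convex. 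For $(B)$, $\alpha\ge 0$ and $\|\cdot\|_{2}^{2}$ is convex, so the term is convex. The entire burden therefore falls on the KL term $(C)$.

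For $(C)$ I would exploit that $P$ is uniform, $P(i)=1/M$, to collapse the divergence to
\[
KL(P\|Q) = -\log M - \tfrac{1}{M}\sum_{i=1}^{M}\log c^{i},
\]
so that, up to an additive constant, $-\beta\,KL(P\|Q)=\tfrac{\beta}{M}\sum_{i}\log c^{i}$, and the claim reduces to showing $\sum_{i}\log c^{i}$ is convex in $\bm{w}$ (equivalently, that $KL(P\|Q)$ is concave). Writing $s_{i}(w^{i})=\log(1+e^{x^{i}w^{i}})$ for the softplus contribution and $S=\sum_{j}s_{j}$, one has $\log c^{i}=\log s_{i}-\log S$, so the target is the convexity of $\sum_{i}\log s_{i}-M\log S$. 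Here I would invoke the \emph{Scalar Composition Theorem}: the inner softplus $s_{i}$ is convex and monotone in $w^{i}$, which is the natural starting point for composing with the outer logarithms.

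The main obstacle is exactly this reduction. The outer $\log$ is concave and increasing, so $\log(\text{convex})$ does not fall cleanly onto either branch of the composition rules, and the shared denominator $S$ couples all coordinates $w^{1},\dots,w^{M}$ together through the single term $-M\log S$. A rigorous argument cannot proceed strictly term by term: it must track the sign of the curvature of $\log s_{i}$ (which, for large $x^{i}w^{i}$, is not obviously of the right sign) and balance it against the rank-one and negative-semidefinite pieces arising from the Hessian of $-M\log S$, verifying the monotonicity and curvature hypotheses of the composition theorem jointly. I expect this bookkeeping around the normalized, nested-logarithm contributions — rather than terms $(A)$ or $(B)$ — to be where the real difficulty lies, and the point at which the sketch must be made precise.
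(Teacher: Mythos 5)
Your handling of the quadratic term and the ridge penalty is correct and coincides with the paper's (which simply declares that part ``trivial''). Your reduction of the KL term is also sound and, in one respect, more careful than the paper's: since the loss contains $-\beta\,KL(P\|Q)$ with $\beta>0$, what must actually be shown is that $-KL$ is convex, i.e.\ that $KL$ is \emph{concave}, which with $P$ uniform reduces exactly to your claim that $\sum_{i}\log c^{i}$ is convex in $\bm{w}$. The paper instead argues that $KL(P\|Q)$ itself is convex and then appeals to a ``linear combination'' of convex functions preserving convexity, overlooking that the coefficient on the KL term in the loss is negative; your decomposition at least gets the sign of the required claim right.

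However, your proposal stops precisely where the proof is needed: you reduce everything to the convexity of $\sum_{i}\log s_{i}-M\log S$ and then observe that the composition rules do not apply, leaving the ``bookkeeping'' to be done. That unproven claim is the entire content of the corollary, so as written this is a gap, not a proof. Worse, the step you defer would in fact fail: take $M=2$ and $x^{1}=x^{2}=1$, so $s_{i}=\log(1+e^{w^{i}})$, and evaluate the second derivative of $h=\log s_{1}+\log s_{2}-2\log(s_{1}+s_{2})$ in the $w^{1}$ direction at $\bm{w}=0$, where $s_{i}=\log 2$, $s_{i}'=1/2$, $s_{i}''=1/4$. One gets $\partial^{2}h/\partial (w^{1})^{2}=(s_{1}''s_{1}-(s_{1}')^{2})/s_{1}^{2}-2(s_{1}''(s_{1}+s_{2})-(s_{1}')^{2})/(s_{1}+s_{2})^{2}\approx-0.16-0.10<0$, so the function you need to be convex is not convex. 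The paper's own argument does not rescue the statement either: it invokes the Scalar Composition Theorem on $-\log$ of a convex function and on $\log$ of a sum of convex functions, neither of which satisfies the theorem's monotonicity hypotheses. In short, the obstacle you diagnosed is real and not surmountable by the route you (or the paper) sketch; you correctly identified the hard step but did not, and on this path cannot, close it.
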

\begin{proof}
The formulation of the loss function for ACE-KL consists of two parts: a regular ridge regression and an additional regularizer. It is trivial to show a ridge regression is a convex function w.r.t. its parameters.\\
Now we show the additional regularizer is also a convex function (for each $j$):
\par\nobreak{\small
\begin{align*}
 KL(P||Q) & =   \sum\limits_{i}^{M}P(i)\log \dfrac{P(i)}{Q(i)}\\
& =   \sum\limits_{i}^{M}P(i)\log P(i) - P(i)\log Q(i)\\
& =  \sum\limits_{i}^{M} C - c\log Q(i) \\
&\textit{($ P(i) \sim Uniform $ , so $ P(i) $ is a constant)}\nonumber\\
& =  \sum\limits_{i}^{M} C - c\log \dfrac{\log(1+e^{w^{j}\cdot x^{j}})}{\sum\limits_{j=1}^{M}\log(1+e^{w^{j}\cdot x^{j}})}\\
& =  \sum\limits_{i}^{M} C - c\log\log(1+e^{w^{i}\cdot x^{i}})+c\log\sum\limits_{j=1}^{M}\log(1+e^{w^{j}\cdot x^{j}}),
\end{align*}
}
where $\log(1+e^{w^{i}\cdot x^{i}})$ is convex. By the \textit{Scalar Composition Theorem}, the $KL$ regularizer is convex. Then a linear combination of the convex ridge regression part and the $KL$ regularizer retain the property of convexity.
\end{proof}



\section{Experiments and Results}
\subsection{Data sets}

We validate our methods on three security related data sets. The first data set is the CERT Insider Threat v6.2 (abbreviated as CERT) \cite{lindauer2014,glasser2013}. It is a synthetically generated, realistic data set consisting of application-level system logs, such as HTTP get/post requests, emails and user login/logout events. 
The second data set contains \textit{pcap} traces from UNB \cite{Shiravi2012}, which we converted to netflow logs using \textit{nfdump}. It is partially labelled with port scanning and intrusion events.  
Lastly, the third data set--AndroidMalware \cite{Zhou2012}--is a collection of about 1,200 malwares observed on Android devices. 

\subsection{Feature Extraction}

To evaluate our methods, we build anomaly detection models on these data sets.  
Note that the models can be supervised or unsupervised as long as they produce an anomaly score. Furthermore, while we ensure these models have reasonable accuracy, building the best possible anomaly detection models for these data sets is not the focus of this work.
We extract the following features from the data sets.

\noindent \textbf{CERT.} Similar to a previous study \cite{Tuor2017}, we extract count features conditioned on time of day, where a day is uniformly discretized into four intervals. In our experiments, we use one day as the smallest time window and each example is the composite record of day-user. We examine three different Internet activities: \textit{``WWW visit"}, \textit{``WWW upload"} and \textit{``WWW download"}. Hence, in this setting, the total number of features are $ 3\times4= 12$, and so is the input dimensionality of the autoencoder model (one of the baselines). 

\noindent \textbf{UNB Netflow:} We extracted 108 features that can be categorized into three sets: Count, Bitmap, and Top-K. The Count features count the number of bytes/packets for incoming and outgoing traffic; the Bitmap features include type of services, TCP flags, and protocols; the Top-K features encode the IP addresses with traffic flows ranked in top k over all the addresses. 

\noindent \textbf{AndroidMalware:} 122 binary features are extracted, mainly related to frequent permission requests from apps. 

\subsection{Evaluation Metrics}
We consider the contributions to be a distribution over features. To quantitatively evaluate contributions produced by a method, we use its Kullback-Leibler (KL) divergence with respect to ground truth contributions. The KL divergence measures how one probability distribution diverges from another probability distribution.
Given the distribution of modeled contributions, $ Q, $
and the ground truth contributions distribution of the data point, $ P, $ the KL divergence is formulated as:
\par\nobreak{\small
\begin{equation}
KL(P||Q)= \sum_{i}^{M}P(f_i)\log \dfrac{P(f_i)}{Q(f_i)},
\end{equation}
}
where $ f_i $ is the $ i $th feature. The lower the KL divergence, the closer the modeled contribution is to the real contribution for that data point. Note that this KL divergence metrics is different from the regularizer term in ACE-KL, which forces the formulated distribution away from a uniform distribution. 

\subsection{Baseline Methods}

We use LIME \cite{Ribeiro2016a} as our main baseline. We consider it representative of similar methods since it is recent and well cited.
LIME only works for classification problems; however, most anomaly detection problems require an anomaly score to express the confidence of detection. We therefore extend LIME to support regression problems. This extension is straightforward: in classification problems, each feature is mapped onto a classification class by looking at the estimated weights of each feature to decide the importance of that feature to the particular class. In a regression problem, we can assume it to be a one-class classification problem. We therefore transform LIME from multi-class classification to a one-class problem, and examine the importance of each feature to the anomaly score. 




\subsection{Evaluation on CERT}
To evaluate ACE and ACE-KL on CERT, first we train an autoencoder  as our black-box model, although in principle it could be any model. Its anomaly score ($As$) on a data point is computed as the \textit{mean squared error} (MSE) between the input and the output vector. 
In addition to applying ACE and ACE-KL, we compute feature contributions from the autoencoder model using the reconstruction error of each of the inputs, similar to \cite{Tuor2017}. Thus, the autoencoder model serves as an additional baseline.
While the CERT data set has some anomalies, we also artificially inject some by perturbing the input features. 
The data set contains two years of activities. We use the first year of the data set for injected anomalies detection, as it has no anomaly marked. 
We also detect anomalies present in the second year.

\subsubsection{Evaluation on Injected Anomalies}
We perturb individual features and groups of features. 
However, due to space limitations, we only present perturbation of groups of five features. The rest of the results on injected anomalies are described in the appendix. 
\paragraph{Multiple Feature Perturbation} 
The synthetic anomalies are created as follows.
We first calculated the mean values of each feature based on the non-preprocessed raw data, and draw from a Poisson distribution based on the mean value of each feature using
$
P(x) = e^{-\lambda}\frac{\lambda^{k}}{k!}.
$
This sampling approach ensures that first, all the synthesized features are integers; second, the original value is around the mean of the raw data. After we sample from this distribution, we perturbed it by adding a value $ \lambda$ to the feature $x$: $ x' = x + \lambda $. This new value's expectation is $ \mathbb{E}[x']=2\lambda, $ which exceeds the mean value of $ \lambda $ by a large magnitude, thus this perturbation can represent an anomaly from the original data.

We randomly chose five features to perturb. 
Each feature of a data point is standardized to $\mathcal{N}(0, 1)$ using the mean and the standard deviation of that feature of the training set, and fed into the trained black-box to create an anomaly score. 
 
 The results are shown in Figure ~\ref{fig:fivevpert}. 
 ACE accurately identified the contributions in both anomalies, and performed significantly better than both baselines considered according to the KL-divergence metric. ACE-KL, while not as accurate as ACE, highlights the top contributors. 
 
\begin{figure}[!htb]
    \centering
    \begin{subfigure}[b]{0.48\textwidth}
        \includegraphics[width=\textwidth]{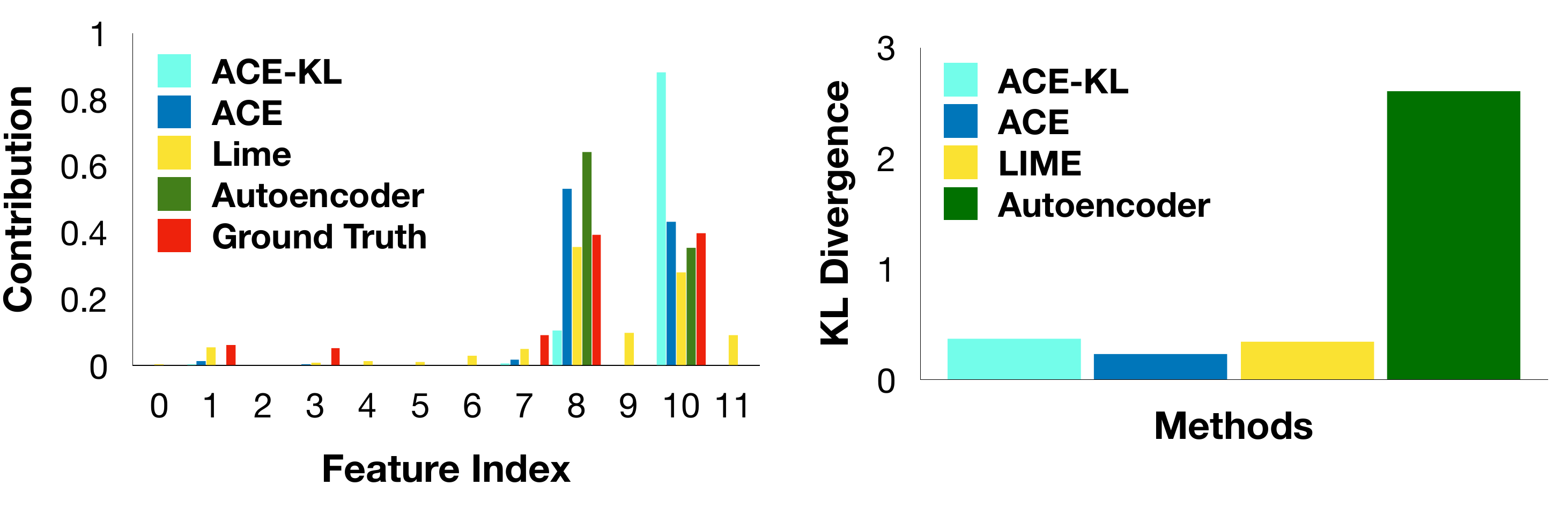}
        \caption{Features 1, 3, 10, 8, 7 were perturbed.}
        \label{fig:eg0}
    \end{subfigure}
    \begin{subfigure}[b]{0.48\textwidth}
        \includegraphics[width=\textwidth]{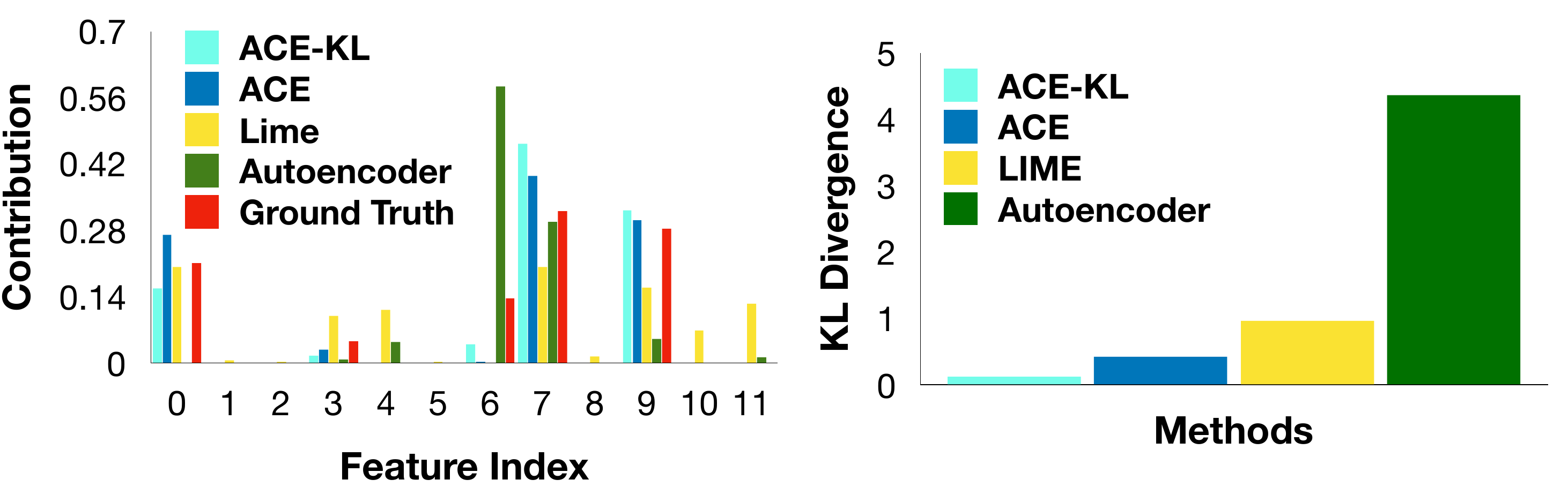}
        \caption{Features 6, 0, 9, 7, 3 were perturbed.}
        \label{fig:eg1}
    \end{subfigure}
    \caption{\textbf{(left)} Feature contributions on two synthetic examples, with perturbation on five randomly chosen features. Contribution is the percentage of a feature towards the anomaly score. \textbf{(right)} KL-divergence of each method with respect to the ground truth.}
\label{fig:fivevpert}
\end{figure} 
\subsubsection{Evaluation on Real Anomalies}
The CERT data set contains labeled
 scenarios where insiders behave maliciously. 
 Figure~\ref{fig:realinsider} shows contribution analysis on the days that have the malicious activities. In Figure \ref{fig:realinsider}(a) and \ref{fig:realinsider}(c), feature $7$ captures the malicious activities, while in Figure \ref{fig:realinsider}(b) feature $8$ is the ground-truth anomalous feature.
The experimental results and the corresponding KL-divergence are shown in Figure~\ref{fig:realinsider}. ACE and ACE-KL accurately capture the feature responsible for the anomalies. Both ACE and ACE-KL have significantly lower KL divergence, outperforming the baselines. 
\begin{figure}[!htb]
    \centering
    \begin{subfigure}[b]{0.48\textwidth}
        \includegraphics[width=\textwidth]{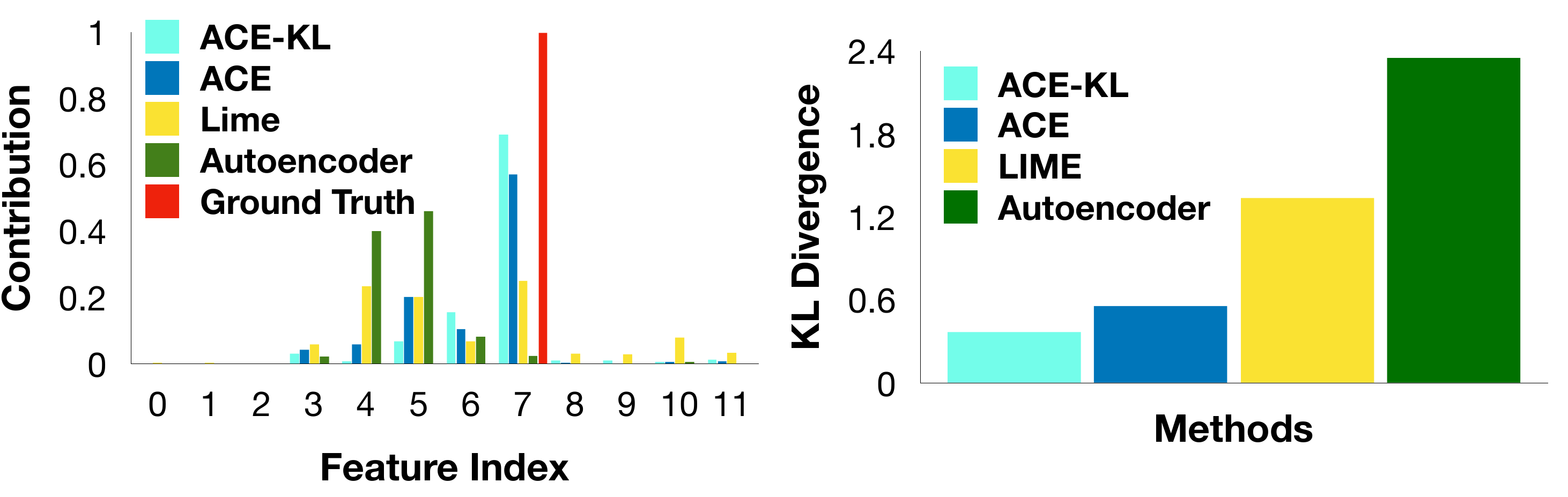}
        \label{fig:ceg0}
        \caption{\footnotesize WWW download anomaly, feature 7, day 398.}
    \end{subfigure}
    \begin{subfigure}[b]{0.48\textwidth}
        \includegraphics[width=\textwidth]{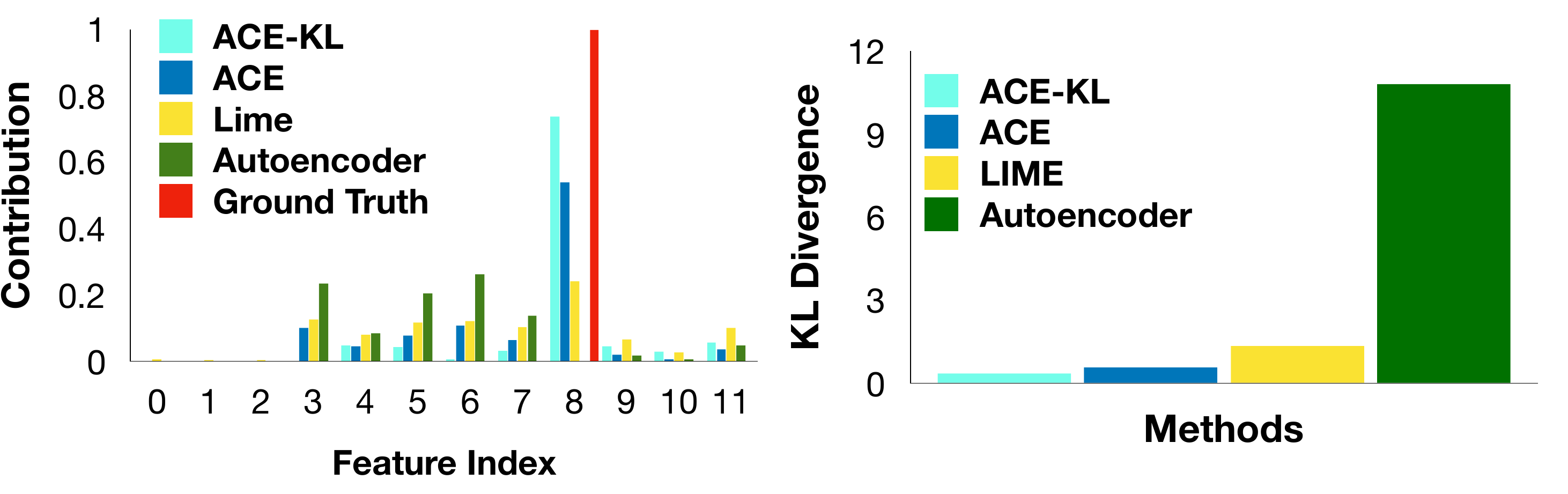}
        \label{fig:ceg1}
        \caption{\footnotesize WWW upload anomaly, feature 8, day 404.}
    \end{subfigure}
    \begin{subfigure}[b]{0.48\textwidth}
        \includegraphics[width=\textwidth]{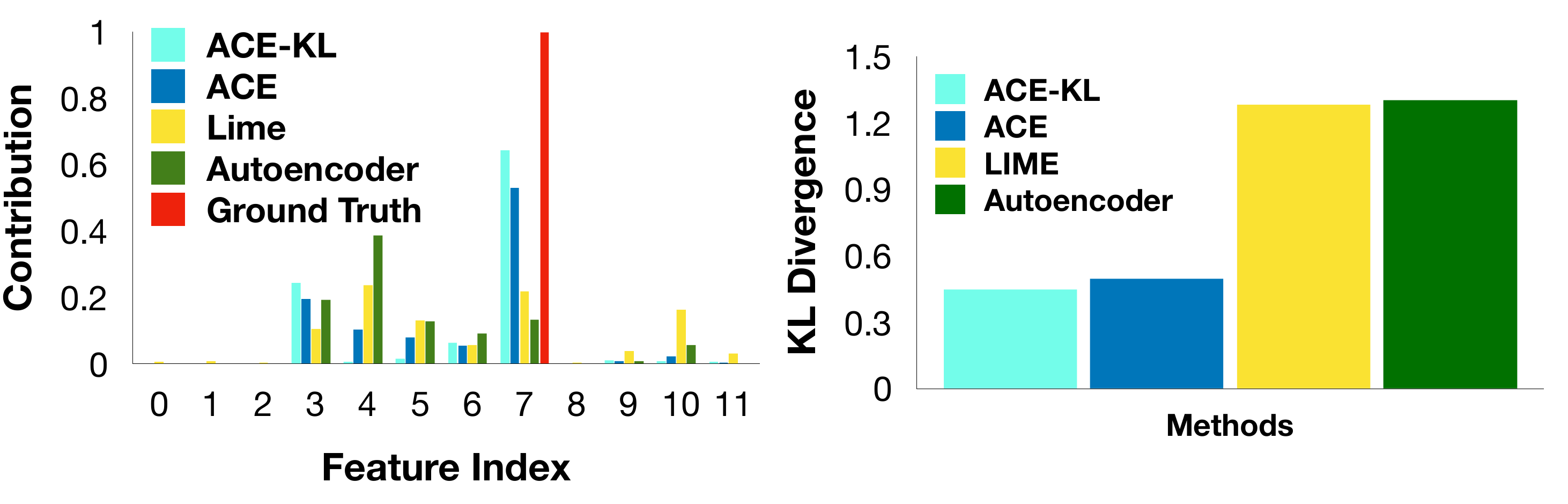}
        \label{fig:ceg2}
        \caption{\footnotesize WWW download anomaly, feature 7, day 409.}
    \end{subfigure}
    \caption{Three real anomalies in the CERT data set. \textbf{(left)} Feature contributions using ACE, ACE-KL and two baselines. \textbf{(right)} KL-divergence between feature contributions computed by the methods and the ground truth contributions. ACE and ACE-KL has the most similar contribution as the ground truth (which is always 1.0).
}
\label{fig:realinsider}
\end{figure} 

\subsection{Evaluation on UNB Netflow Data Set}

This section presents the evaluation of ACE and ACE-KL on UNB Netflow with similar settings as CERT. A separately trained autoencoder is used as a black-box anomaly detection model. Due to space limitations, we present results of applying ACE and ACE-KL to only two anomalies here. 
Table~\ref{tab:x} provides a short description of the top 10 features that are useful to interpret the results. Figure \ref{fig:netcontribution} shows the feature contributions for the anomalies, and Table~\ref{tab:table1} provides details on the feature values and their contributions. Since the annotation is at the packet level, it is not easy for a person to manually determine the root cause for the anomaly.


\begin{table*}[t]
\small
\begin{subtable}{1.0\textwidth}
\centering
\begin{tabular}{ |p{0.3\textwidth}|p{0.5\textwidth}| }
  \hline
	\# std src ports & Number of standard source ports\\
	avg std src ports per dst ip & Average number of standard source ports per destination IP\\
	protos out 3 & Third bit in the Protocol feature (3 bit feature indicating TCP, UDP, or Other)\\
	top1 out & Top 1st outgoing IP address (in terms of bytes)\\
	top3 out & Top 3rd outgoing IP address (in terms of bytes)\\
  \hline
\end{tabular}
\caption{\small Features for outgoing flows (when IP is source)}
\label{tab:xa}
\end{subtable}%
\vfill
\begin{subtable}{1.0\textwidth}
\centering
\begin{tabular}{ |p{0.3\textwidth}|p{0.5\textwidth}| }
  \hline
 	max duration in & Maximum incoming flow duration\\
	\# std dst ports & Number of standard destination ports\\
	avg std dst ports per src ip & Average number of standard destination ports per source IP\\
	Flags in 3 & Third bit in the flags field\\
    total duration in & Total duration of the incoming flows\\
  \hline
\end{tabular}
\caption{\small Features for incoming flows (when IP is destination)}
\label{tab:xb}
\end{subtable}%
\caption{Short descriptions of top features in the results.}
\label{tab:x}
\end{table*}

For anomaly 1, the highest contributing feature is `\textbf{max\_duration\_in}', which is the maximum duration of an incoming flow into this IP address (192.168.1.103). After examining the netflow records, we found that the high value for this feature was related to long-lived (i.e., persistent) TCP connections. Although benign, this was an unusual activity relative to other recorded traffic. The other high values correspond to the number of standard source and destination ports. This was found to be related to a port scanning activity, which was not previously discovered, i.e., was not labeled. Anomaly 2 is almost exactly similar to Anomaly 1 with a similar port scanning activity. 


\begin{figure}[!htb]
    \centering
    \begin{subfigure}[b]{0.3\textwidth}
        \includegraphics[width=\textwidth]{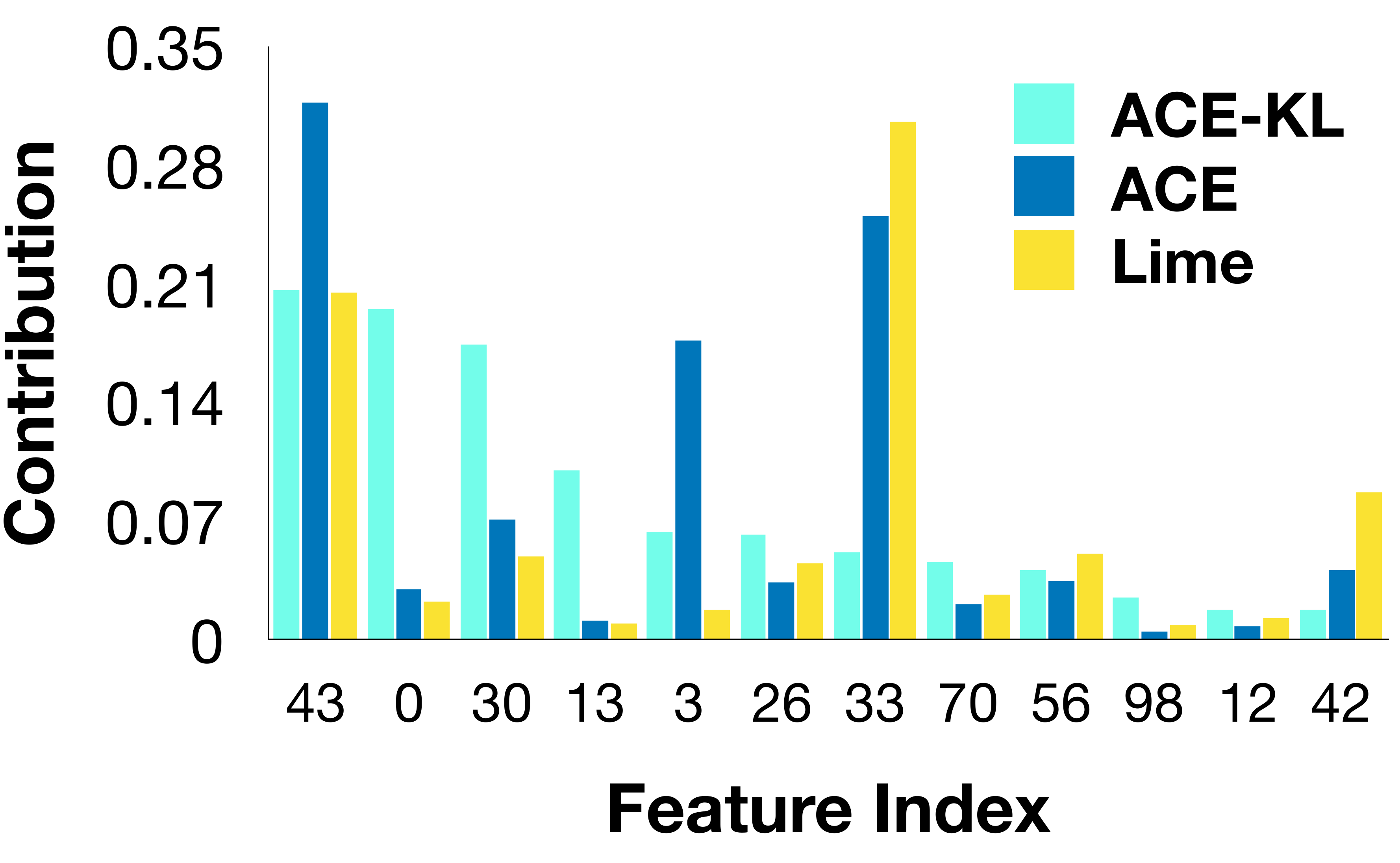}
        \caption{Anomaly 1}
        \label{fig:net1}
    \end{subfigure}
    \begin{subfigure}[b]{0.3\textwidth}
        \includegraphics[width=\textwidth]{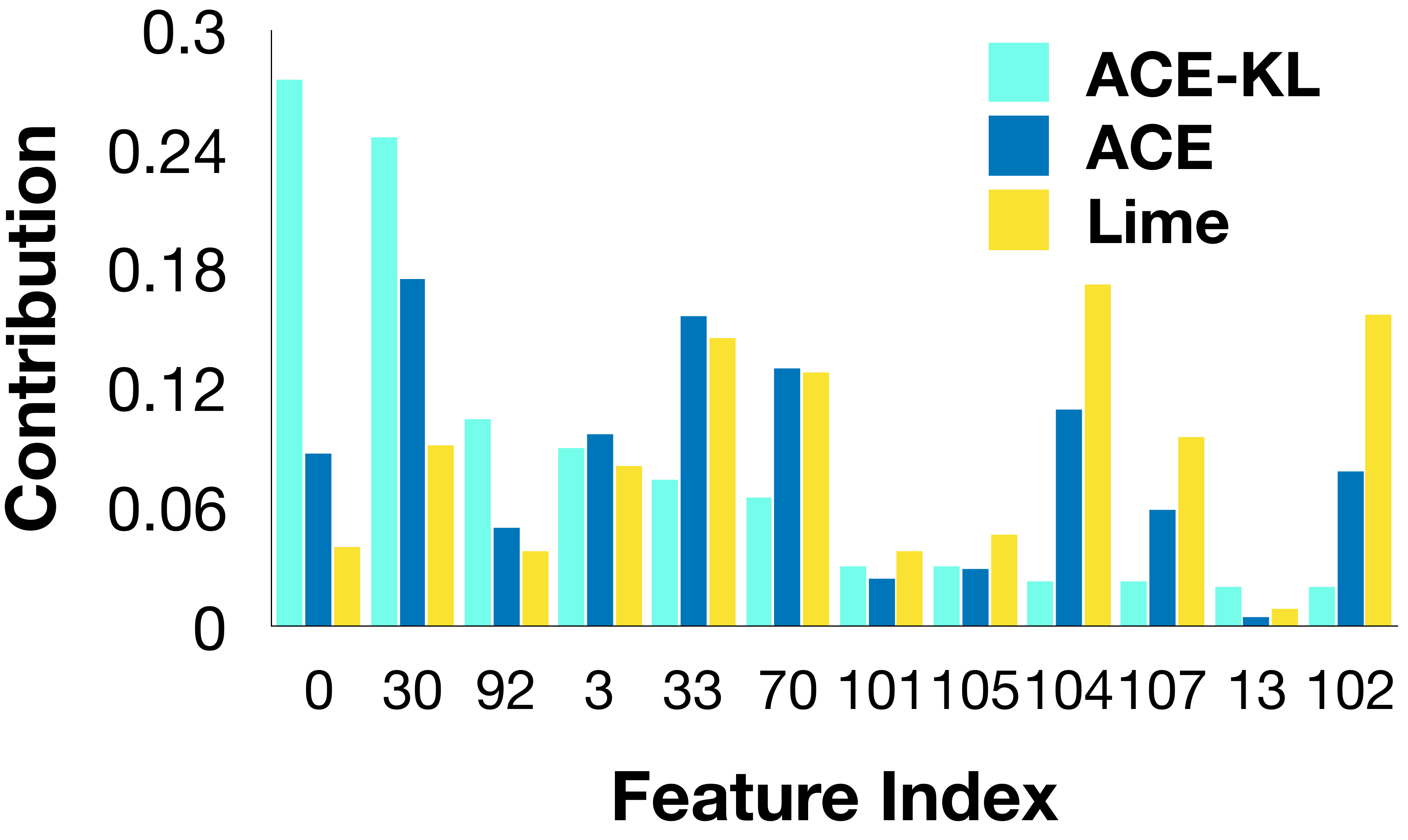}
        \caption{Anomaly 2}
        \label{fig:net2}
    \end{subfigure}
\caption{Contribution analysis on two anomalies in netflow data.}
\label{fig:netcontribution}
\end{figure}
\begin{table*}[t]
\small
\begin{subfigure}[b]{0.5\textwidth}
\centering
\begin{tabular}{ |l|p{0.3\textwidth}|c|c|c| }
  \hline
  Index & Feature Name & ACE-KL & ACE & value\\
  \hline
  43 & max duration in & \textbf{0.207} & \textbf{0.317} & \textbf{239.961}\\
  0 & \# std src ports & \textbf{0.195} & 0.030 & \textbf{158}\\
  30 & \# std dst ports & \textbf{0.174} & 0.071 & \textbf{156}\\
  13 & max duration out & 0.100 & 0.011 & 240.085\\
  3 & avg std src ports per dst ip & 0.064 & \textbf{0.177} & \textbf{1}\\
  26 & min n bytes out & 0.062 & 0.034 & 20\\
  33 & avg std dst ports per src ip & 0.052 & \textbf{0.250} & \textbf{1}\\
  70 & protos out 3 & 0.046 & 0.021 & 1\\
  56 & min n bytes in & 0.041 & 0.035 & 20\\
  98 & top1out & 0.025 & 0.005 & \tiny{192.168.1.101}\\
  12 & total duration out & 0.018 & 0.008 & 62154.867\\
  42 & total duration in & 0.018 & 0.041 & 44405.557\\
  \hline
\end{tabular}
\caption{\footnotesize Anomaly1: 192.168.1.103, Sunday}
\label{tab:table1_b}
\end{subfigure}
\hspace{0.5em}
\begin{subfigure}[b]{0.5\textwidth}
\centering
\begin{tabular}{ |l|p{0.3\textwidth}|c|c|c| }
  \hline
  Index & Feature Name & ACE-KL & ACE & value\\
  \hline
  0 & \# std src ports & \textbf{0.275} & 0.087 & \textbf{158}\\
  30 & \# std dst ports & \textbf{0.246} & \textbf{0.175} & \textbf{156}\\
  92 & flags in 3 & \textbf{0.104} & 0.0496 & \textbf{0}\\
  3 & avg std src ports per dst ip & 0.090 & 0.097 & 0\\
  33 & avg std dst ports per src ip & 0.074 & \textbf{0.156} & \textbf{0}\\
  70 & protos out 3 & 0.065 & \textbf{0.130} & \textbf{1}\\
  101 & top4 out & 0.030 & 0.024 & \tiny{67.220.214.50}\\
  105 & top3 in & 0.030 & 0.029 & \tiny{61.112.44.178}\\
  104 & top2 in & 0.023 & 0.109 & \tiny{125.6.176.113}\\
  107 & top5 in & 0.023 & 0.059 & \tiny{192.168.5.122}\\
  13 & max duration out & 0.020 & 0.005 & 280.53\\
  102 & top5 out & 0.020 & 0.078 & \tiny{203.73.24.75}\\
  \hline
\end{tabular}
\caption{\footnotesize Anomaly2: 192.168.2.110, Sunday}
\label{tab:table1_c}
\end{subfigure}
\caption{Contributions and feature values for top two anomalies in netflow data. The contributions in bold are the top ones.}
\label{tab:table1}
\end{table*}

Identifying anomalies from netflow records is a time consuming and laborious (and thus error-prone) task. Since our method is able to systematically provide a basic explanation (in terms of features) of why some of the anomalies were identified as such, the internal security expert who we consult is convinced that our method is trustworthy and practical. As noted earlier, several of the IP addresses exhibited multiple distinct anomalous behaviors, as well as benign characteristics such as the persistent TCP connections for certain applications. As future work the expert recommended investigating how to systematically discern between multiple anomalies involving a single IP address, to make it easier for a security analyst to understand which are malicious and require their attention, and which are benign and can be ignored. This would accelerate an analyst's ability to respond faster to malicious activities, and therefore improve the security of the analyst's organizations.

\subsection{Evaluation on Android Malware Data Set}
Finally, we evaluate ACE and ACE-KL on the Android malware data set\cite{Zhou2012}. This data set captures various features related to app activities, including their installation methods, activation mechanisms as well as their susceptibility to carry malicious payloads. In this data set, each example is a numeric, binary vector of 122 dimensions, representing features for  malware detection. Peng et al. \cite{Peng2012} successfully built probabilistic generative models for ranking risks of those Android malwares in a semi-supervised learning setting by using a large amount of additional unlabeled data. The risk scoring procedure is a form of anomaly detection, and the risk scores equate to anomaly scores. Thus, in this evaluation, we used the pre-built hierarchical mixture of naive Bayes (HMNB) model \cite{Peng2012} as the black-box model to generate an anomaly score, and applied our approach to explain the anomaly. As the HMNB model calculates the likelihood of a malware in the population, we use the negative log-likelihood as the anomaly score.

We inspected the four malwares that obtained the highest anomaly score by using the pre-trained HMNB model. Before we analyzed the anomalies using ACE and ACE-KL, all the $0$s in the features were replaced by $-1$s, as a $0$ feature will result in a constant contribution of a feature.  The contributions of each feature is calculated using ACE and ACE-KL. The final results are presented in Fig~\ref{fig:malwarecontribution}. The feature indices are sorted by the contributions calculated by ACE, and we only show the top 10 features. In all four cases, ACE and ACE-KL produce consistent contributions, although their results differ from LIME. 

\begin{figure}[!htb]
    \centering
    \begin{subfigure}[b]{0.24\textwidth}
        \includegraphics[width=\textwidth]{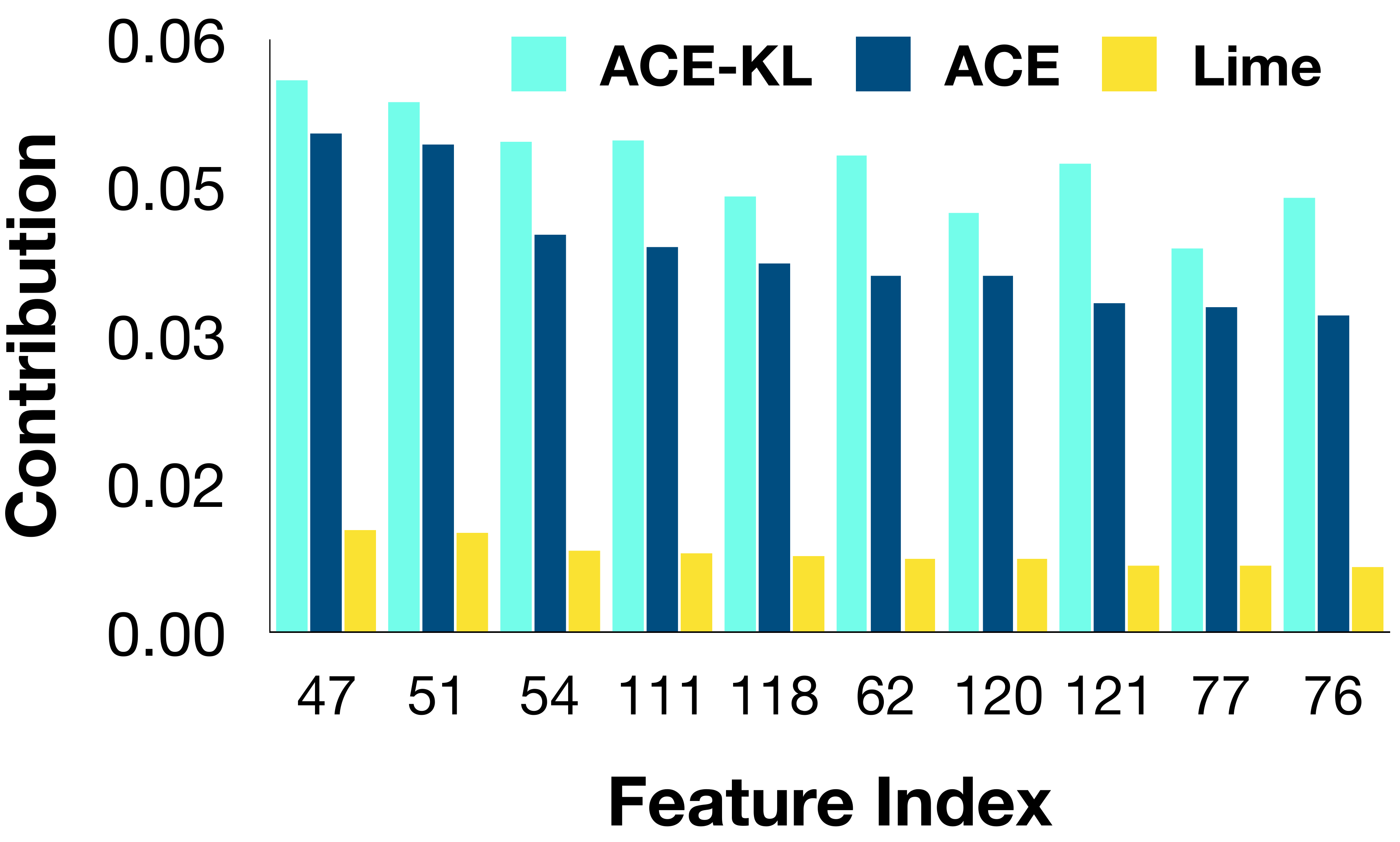}
        \caption{Anomaly 1}
        \label{fig:mw1}
    \end{subfigure}
    \begin{subfigure}[b]{0.24\textwidth}
        \includegraphics[width=\textwidth]{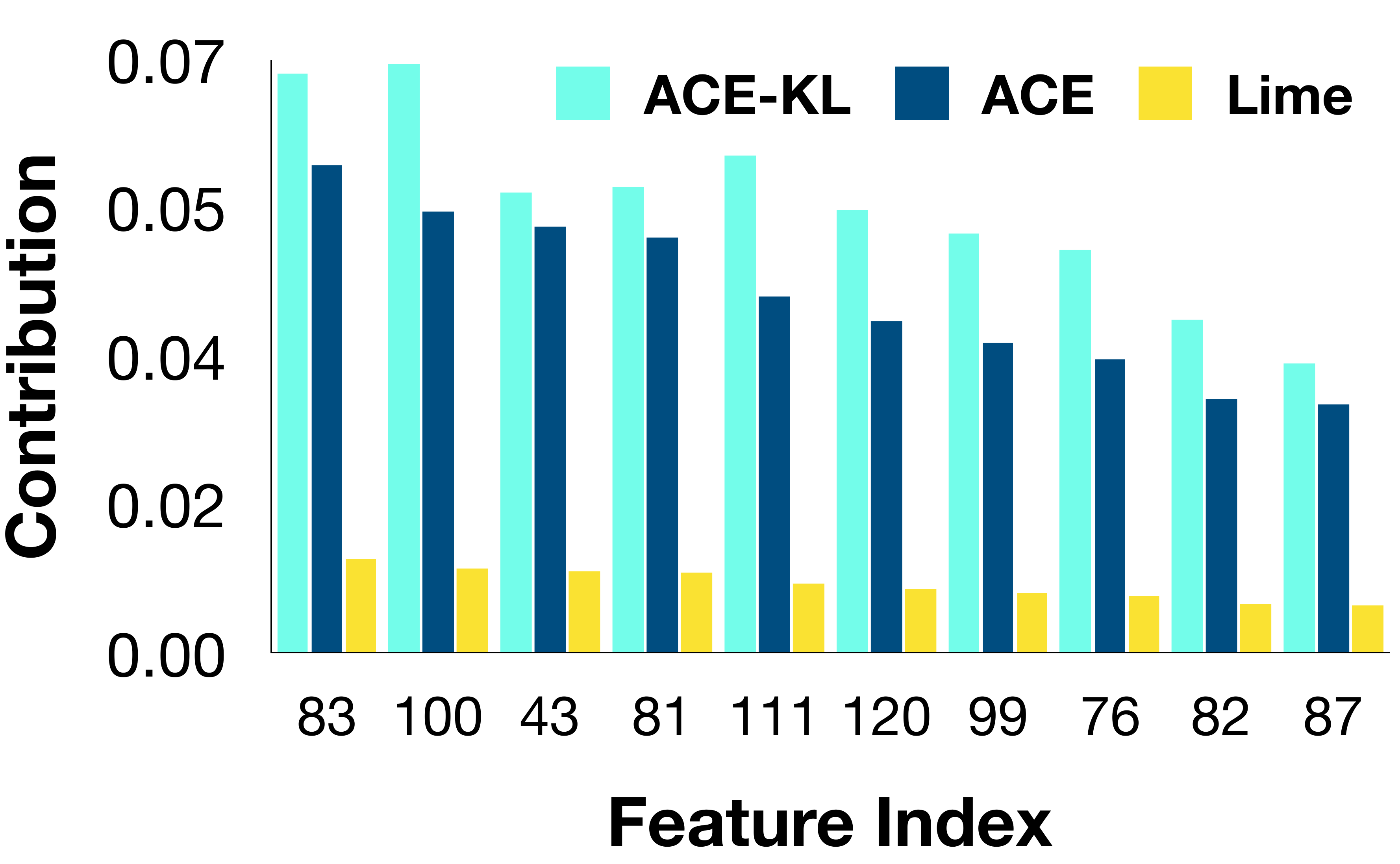}
        \caption{Anomaly 2}
        \label{fig:mw2}
    \end{subfigure}
    \begin{subfigure}[b]{0.24\textwidth}
      \includegraphics[width=\textwidth]{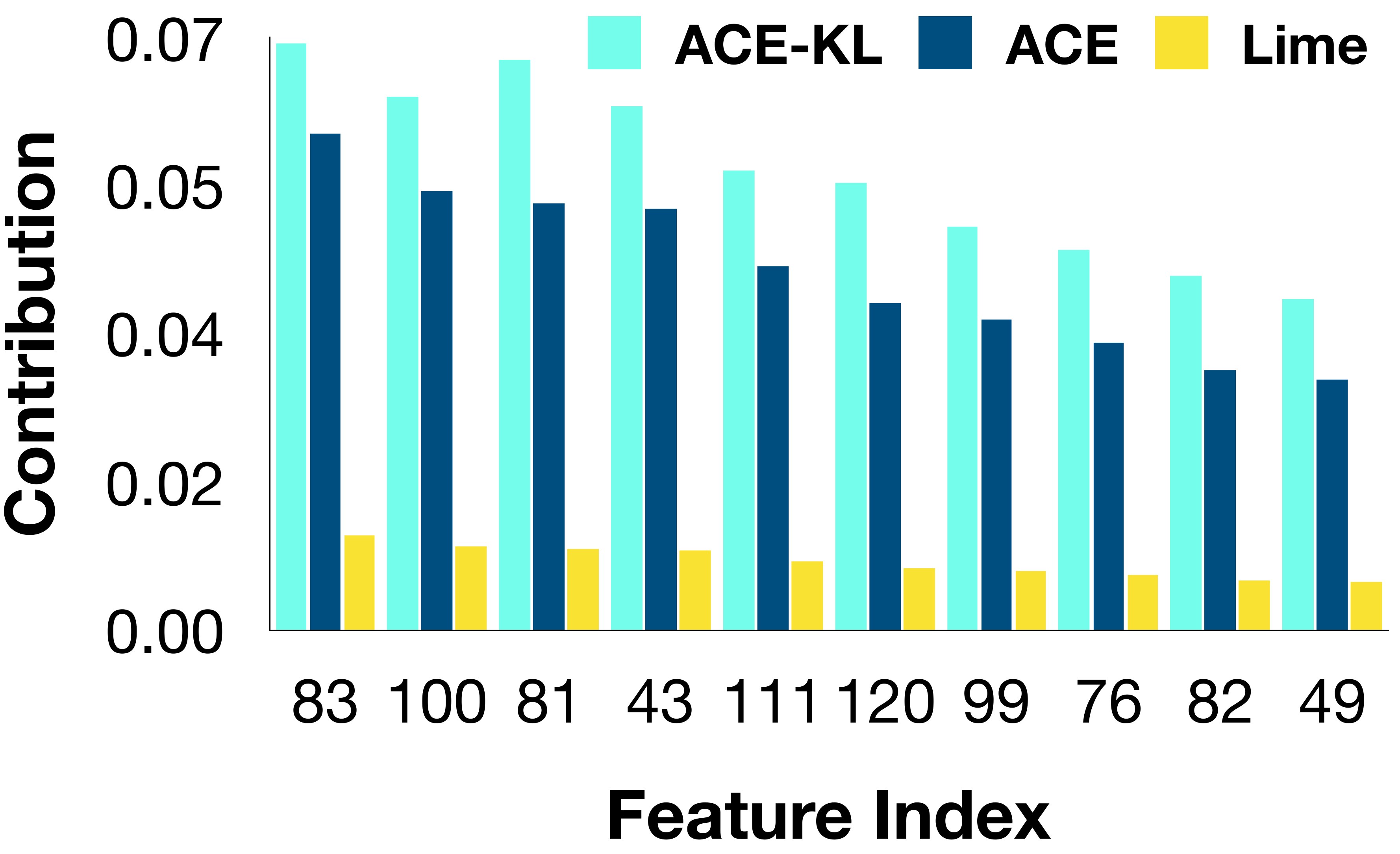}
      \caption{Anomaly 3}
      \label{fig:mw3}
    \end{subfigure}
    \begin{subfigure}[b]{0.24\textwidth}
        \includegraphics[width=\textwidth]{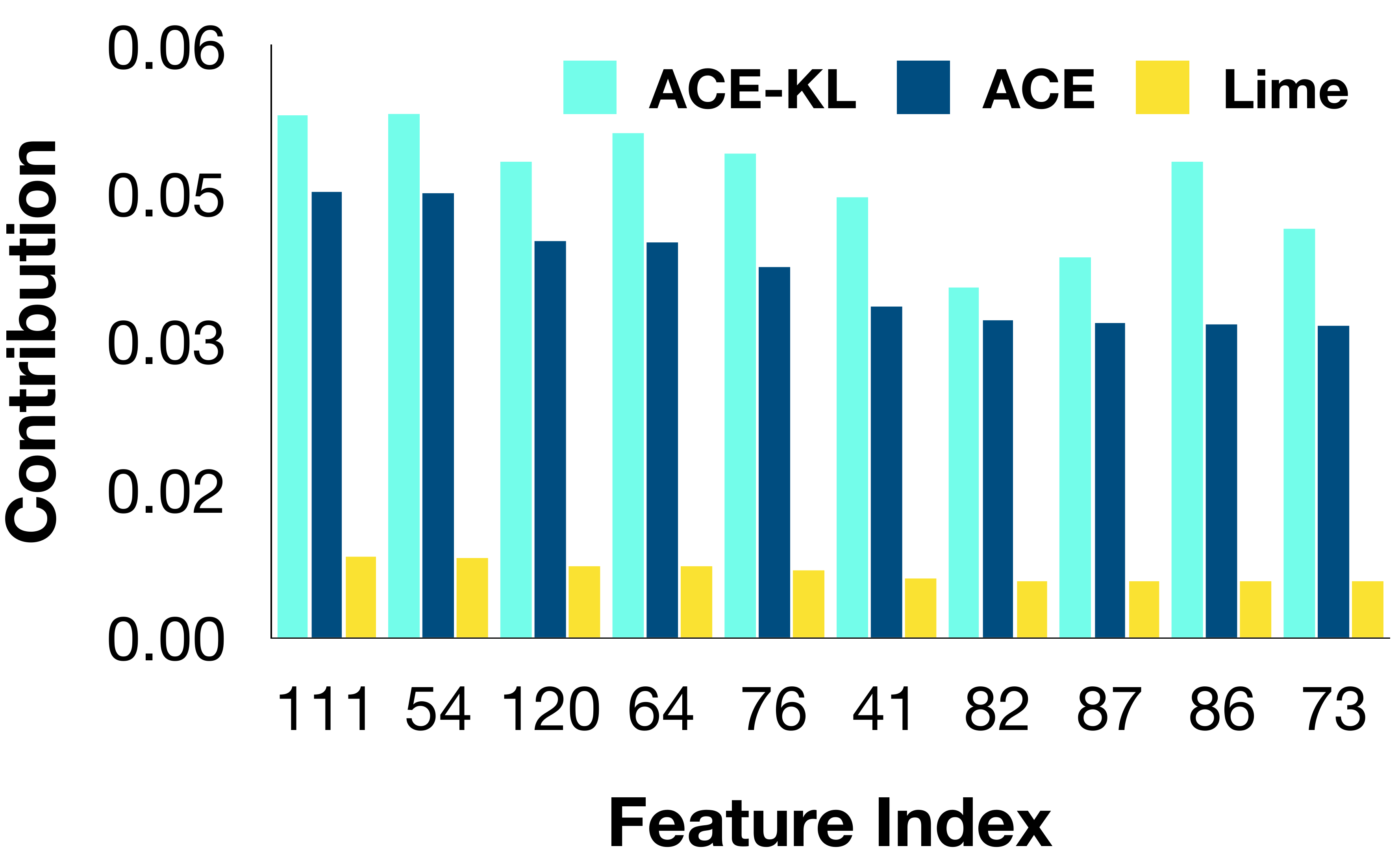}
        \caption{Anomaly 4}
        \label{fig:mw4}
    \end{subfigure}
\caption{Contribution analysis on four anomalies in Android malware data. We only show the top 10 features that contribute most significantly to the anomaly score in terms of percentage.}
\label{fig:malwarecontribution}
\end{figure}

To gain a better understanding of the difference between ACE, ACE-KL and LIME, we show the probability mass graph of all the features as the contributions for Malware 1 in Figure \ref{fig:androiddist}. As stated, both ACE and ACE-KL identified the same features that contribute most to the anomaly. Further, the contribution distribution induced by ACE-KL forms a more skewed distribution, highlighting those features that contribute most to the anomaly while neglecting those with small contributions. In contrast, the contribution distribution calculated by LIME is relatively flat compared to ACE and ACE-KL.

\begin{figure}[!htb]
    \centering
    \includegraphics[width=0.5\textwidth]{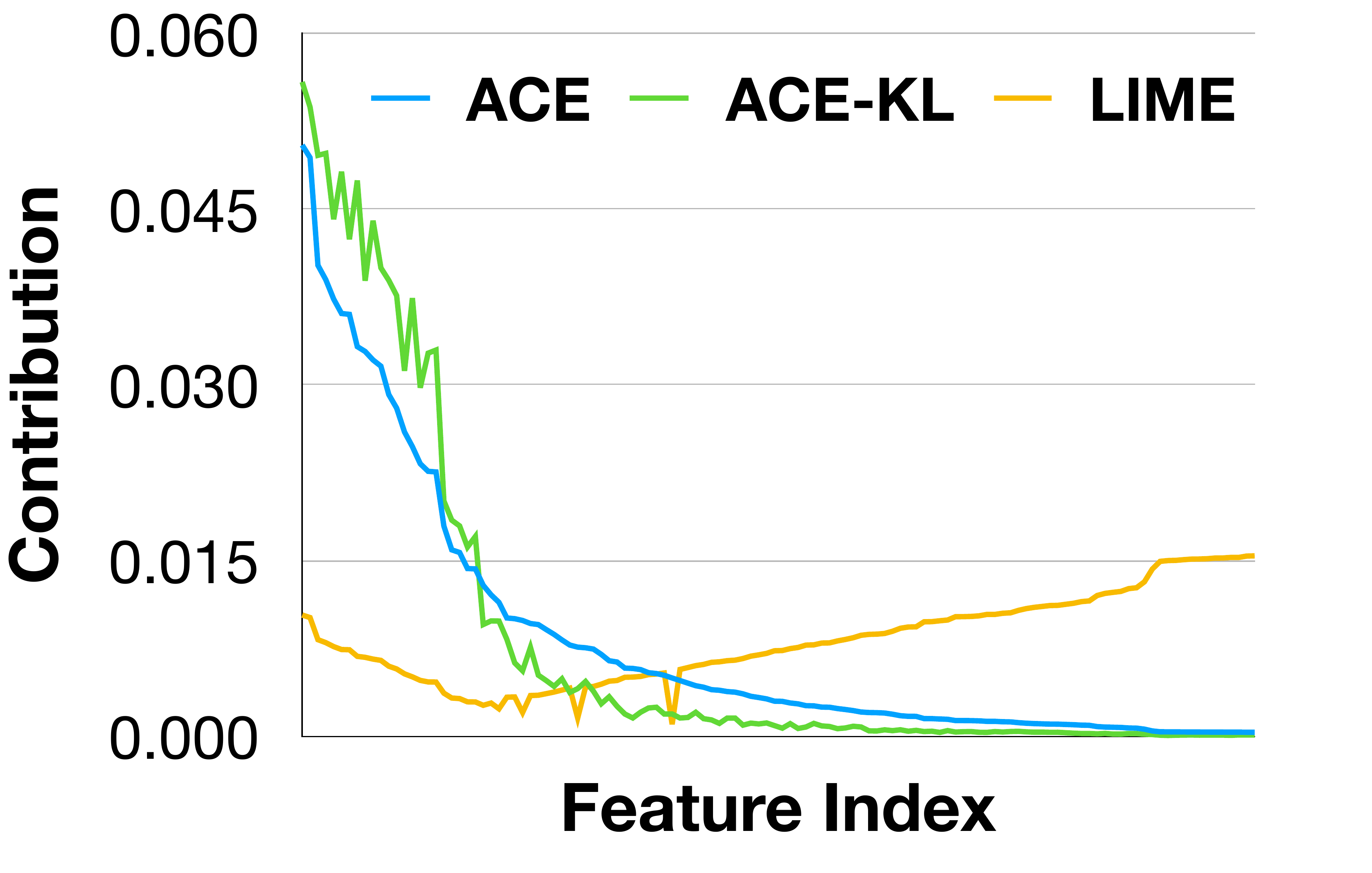}
    \caption{Probability mass function for each feature in Malware 1. This forms the whole contribution distribution to the anomaly score for this Malware.}
\label{fig:androiddist}
\end{figure}

\textbf{Anomaly Remediation:}
Although the Android Malware data set is labeled with anomalies, the contributing features to these anomalies are unknown, making it difficult to validate our results. 
To get some degree of validation, we conducted additional experiments which we call ``anomaly remediation". Essentially, we change input feature values (flip binary features) to repair a particular anomaly, i.e., to see if the anomaly score reduces significantly for a particular example. 

In these experiments, we first flip the top 10 binary contributing features detected by ACE (or ACE-KL, in all four cases the top 10 features are identical for ACE and ACE-KL) for the four anomalies, and the top 10 features selected by LIME. We also randomly sample 10 features among all the 112 features, and flip them. Our conjecture is as follows: if the true features causing the Android app to be classified as malware correspond to those detected by ACE, then fixing the anomaly (by flipping the features) should result in much higher drop in the anomaly score than if the 10 features were randomly picked. 
The results of our experiments are summarized in Figure \ref{fig:androidamendment}.

\begin{figure}[!htb]
    \centering
    \includegraphics[width=0.5\textwidth]{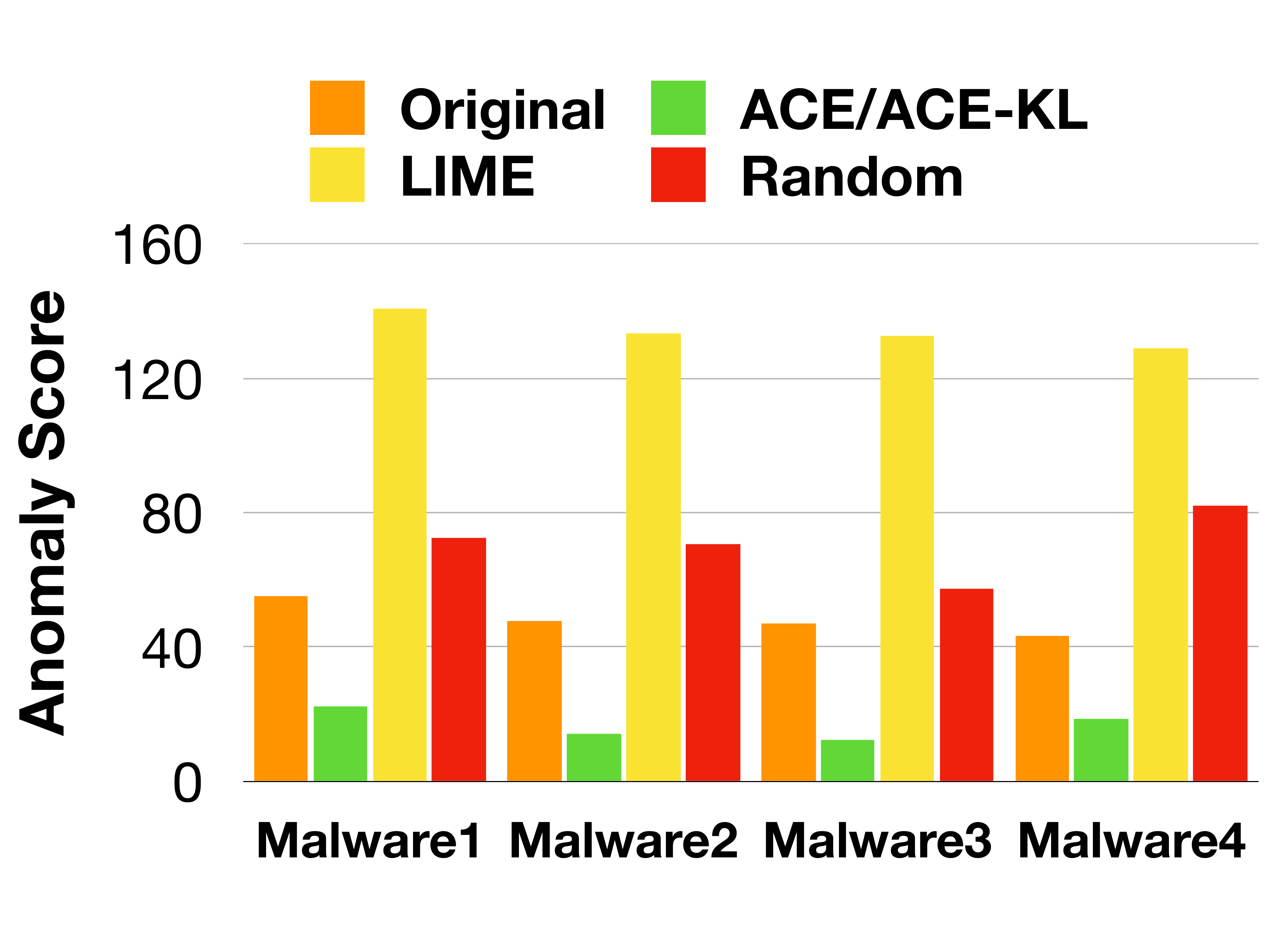}
    \caption{Comparison of original anomaly scores, the scores after anomaly remediation for ACE/ACE-KL and LIME, and the scores after random feature selection. Remediation with ACE/ACE-KL greatly reduces the anomaly score after the correct features contributing mostly to the anomaly score are identified, while LIME and randomly choosing a feature to remedy increase the anomaly score (by flipping a feature that does not contributing significantly to the anomaly originally).}
\label{fig:androidamendment}
\end{figure}
As can be seen in Figure \ref{fig:androidamendment}, by flipping the top 10 features detected by ACE/ACE-KL, the anomaly scores generated by the well-trained black-box model significantly drop for all four malwares. If we randomly pick the 10 features, the anomaly scores increase for all the four malwares. This is expected since only a small number of features are likely to cause a particular anomaly, and random sampling is more likely to select non-contributing features. Surprisingly, remediation of the top 10 features selected by LIME result in a higher increase in the score than random selection, which further shows LIME is not suitable for this problem.
We suspect this is likely because LIME only considers the weight vector of the regression framework, neglecting the importance of whether the feature is 1 or -1. 
\section{Conclusions}
In this paper we proposed methods for explaining results of complex security anomaly detection models in terms of feature contributions, which we define as the percentage of a particular feature contributing to the anomaly score. Based on our experimental results on synthetic and real data sets, we demonstrated that ACE consistently outperforms the baseline approaches for anomaly detection explanation. ACE-KL helps provide a simpler explanation focusing on the most significant contributors. Both approaches have valuable applications in the area of anomaly detection explanation in security. In the future, we plan to further validate our approach in other security problems and other domains.

\bibliographystyle{IEEEtran}

\bibliography{mybib}

\section{Appendix: Additional Experimental Results}
\subsection{Perturb One Feature for CERT}
In this section we present the experimental results of applying different anomaly explanation methods by perturbing only one of the twelve features in the CERT data set as an injected anomaly in Figure \ref{fig:onepert}. The left column shows the contributions of each feature calculated by the anomaly explanation method, and the right column the KL-divergence between the distribution of the calculated contributions and the uniform distribution. As we can see from Figure \ref{fig:onepert}, ACE and ACE-KL perform well across all six examples consistently, while Autoencoder and LIME fail to capture the contribution of the anomaly in some cases even there is only one anomaly feature.

\begin{figure}[!ht]
    \centering
    \begin{subfigure}[b]{0.48\textwidth}
        \includegraphics[width=\textwidth]{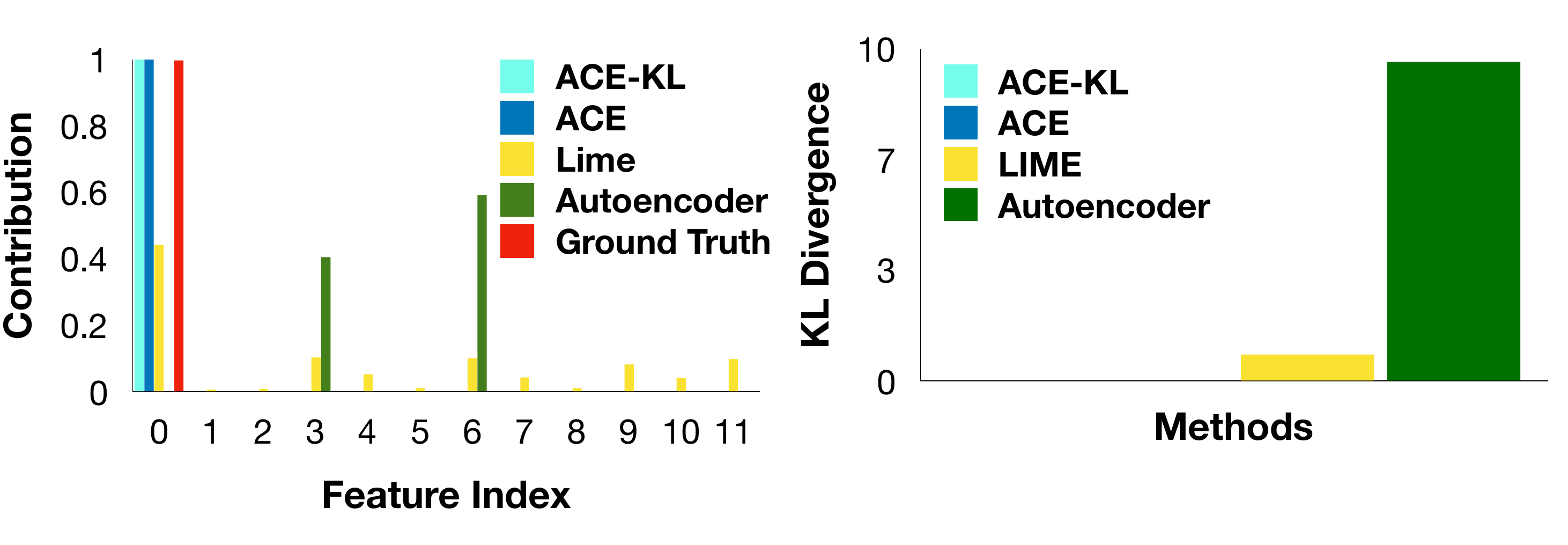}
        \caption{Features 0 is perturbed.}
    \end{subfigure}
    \begin{subfigure}[b]{0.48\textwidth}
        \includegraphics[width=\textwidth]{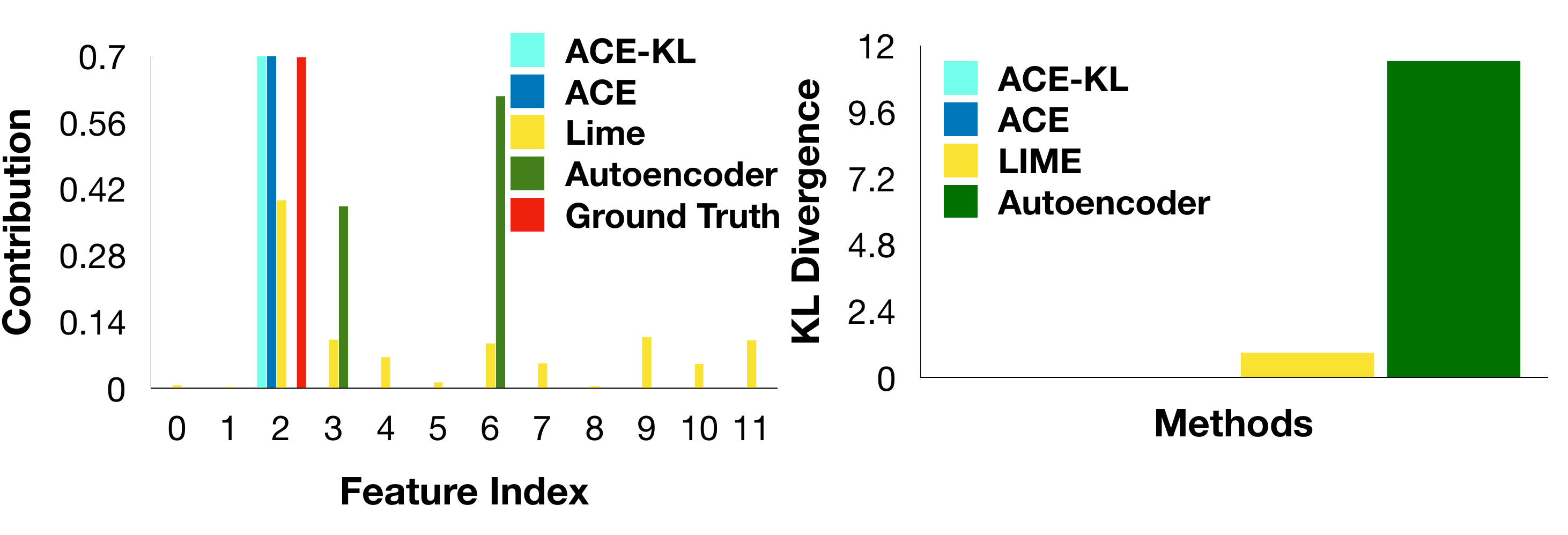}
        \caption{Features 2 is perturbed.}
    \end{subfigure}
    \begin{subfigure}[b]{0.48\textwidth}
        \includegraphics[width=\textwidth]{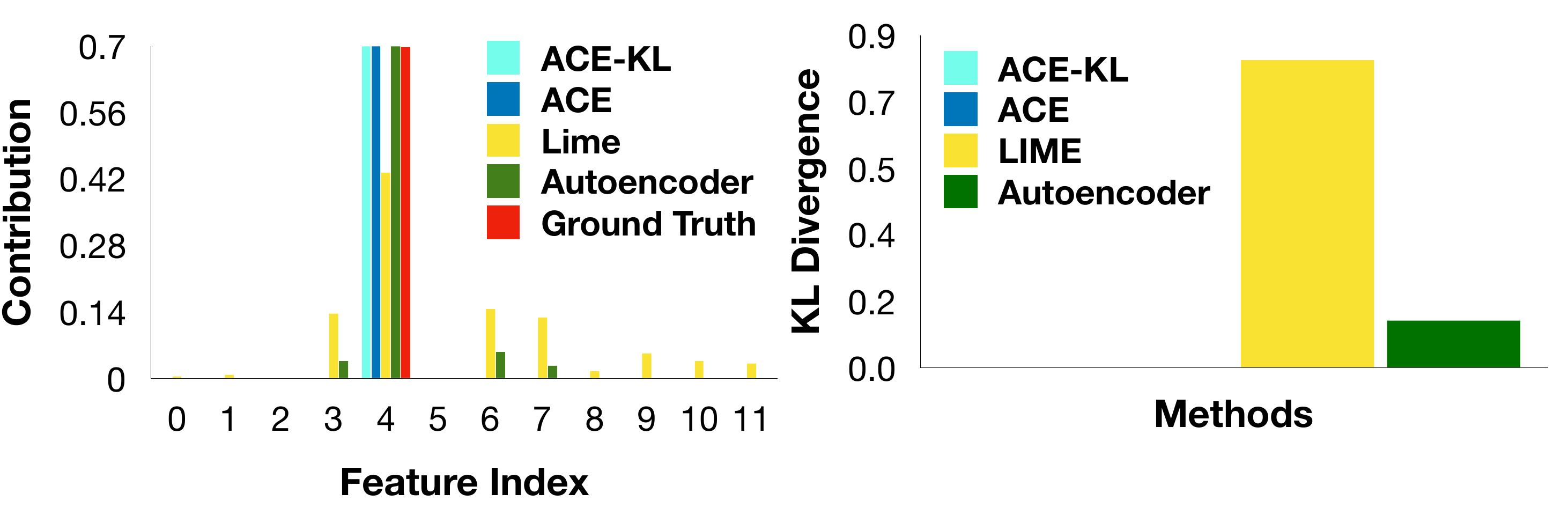}
        \caption{Features 4 is perturbed.}
    \end{subfigure}
    \begin{subfigure}[b]{0.48\textwidth}
        \includegraphics[width=\textwidth]{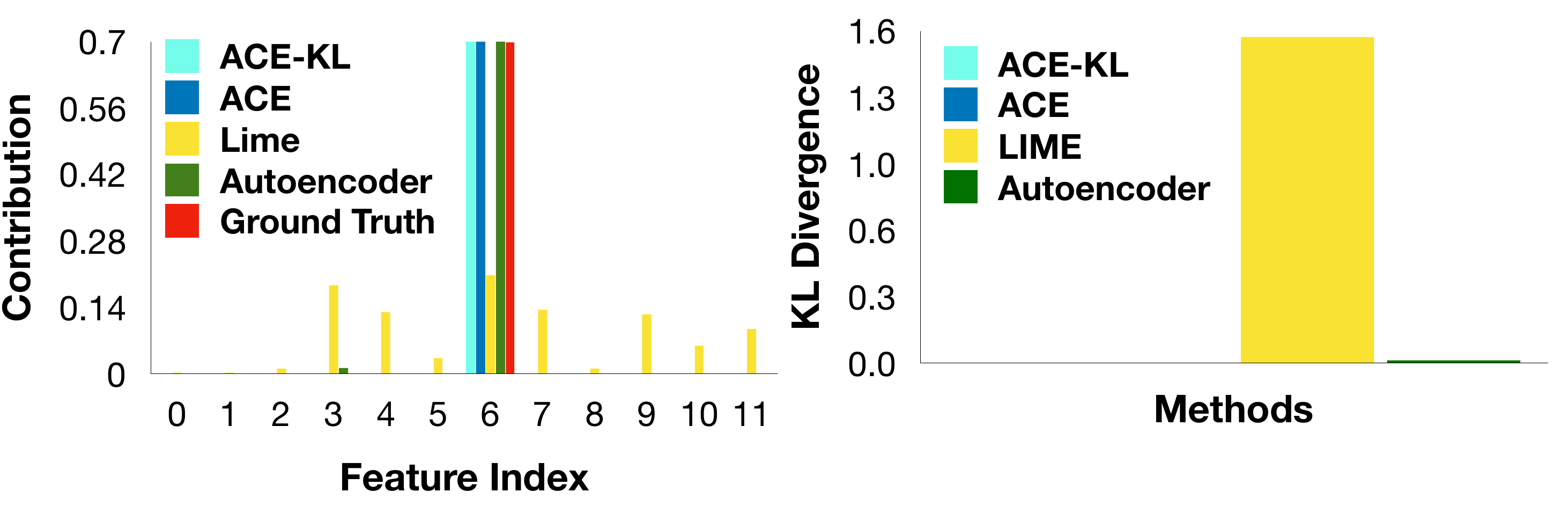}
        \caption{Features 6 is perturbed.}
    \end{subfigure}
    \begin{subfigure}[b]{0.48\textwidth}
        \includegraphics[width=\textwidth]{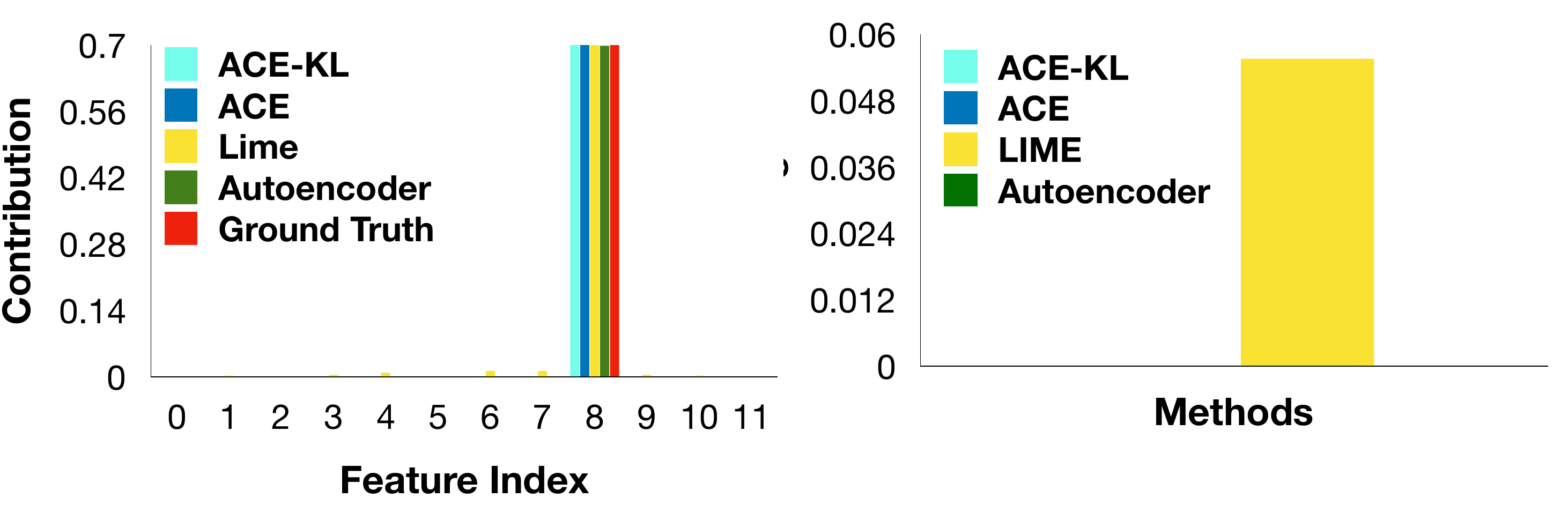}
        \caption{Features 8 is perturbed.}
    \end{subfigure}
    \begin{subfigure}[b]{0.48\textwidth}
        \includegraphics[width=\textwidth]{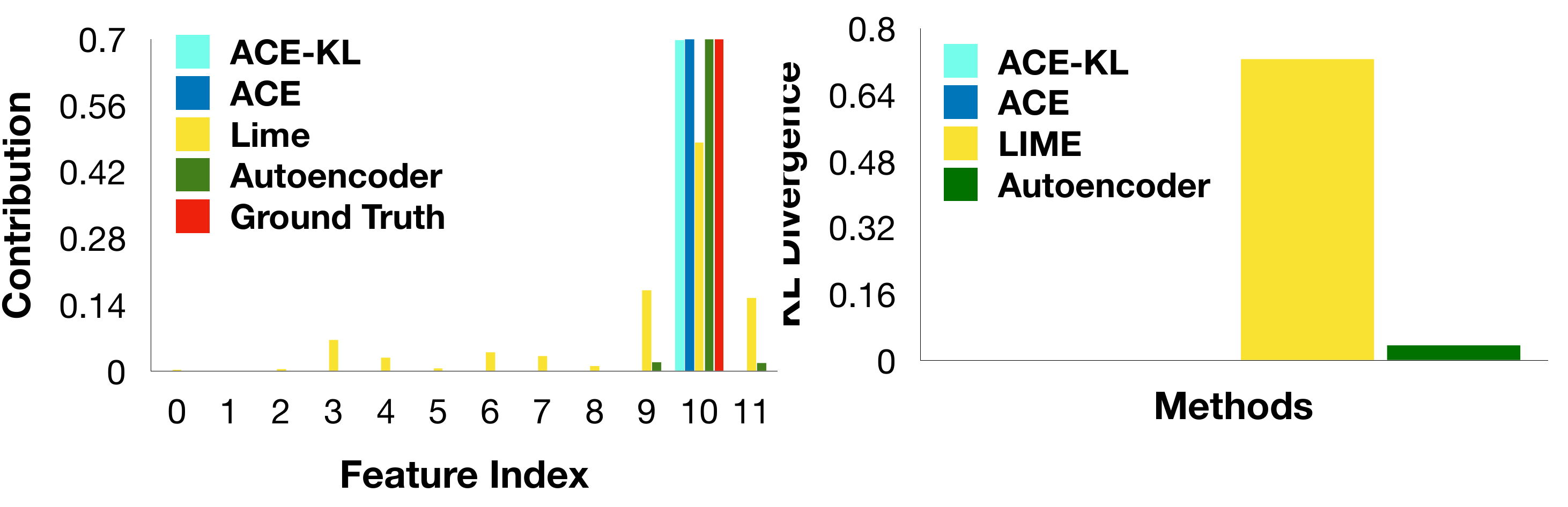}
        \caption{Features 10 is perturbed.}
    \end{subfigure}
    \caption{Feature contribution calculated by different methods on 6 synthetic examples, where each of them has one feature perturbed. The left side are the contributions of each feature calculated using different method, and the right side are the KL-divergence for each method.}
\label{fig:onepert}
\vspace{-1em}
\end{figure} 

\subsection{Perturb Two Feature for CERT}
In this section, we present the experimental results of applying different anomaly explanation methods by perturbing two of the twelve features in the CERT data set as an injected anomaly in Figure \ref{fig:twopert}. As previously described, the left column shows the contributions of each feature calculated by anomaly explanation method, and the right column the KL-divergence between the distribution of the calculated contributions and the true distribution. As seen from Figure \ref{fig:twopert}, ACE and ACE-KL perform well across all four examples consistently, while Autoencoder only captures the second anomaly in the third and the fourth examples, and LIME fails to capture any of the anomalies in an accurate manner, with higher KL-divergence compared to the true distribution. These results further empirically support our claim that LIME is not suitable for anomaly explanation in the security domain while ACE and ACE-KL are very powerful tools in this application domain.

\begin{figure}[!ht]
    \centering
    \begin{subfigure}[b]{0.48\textwidth}
        \includegraphics[width=\textwidth]{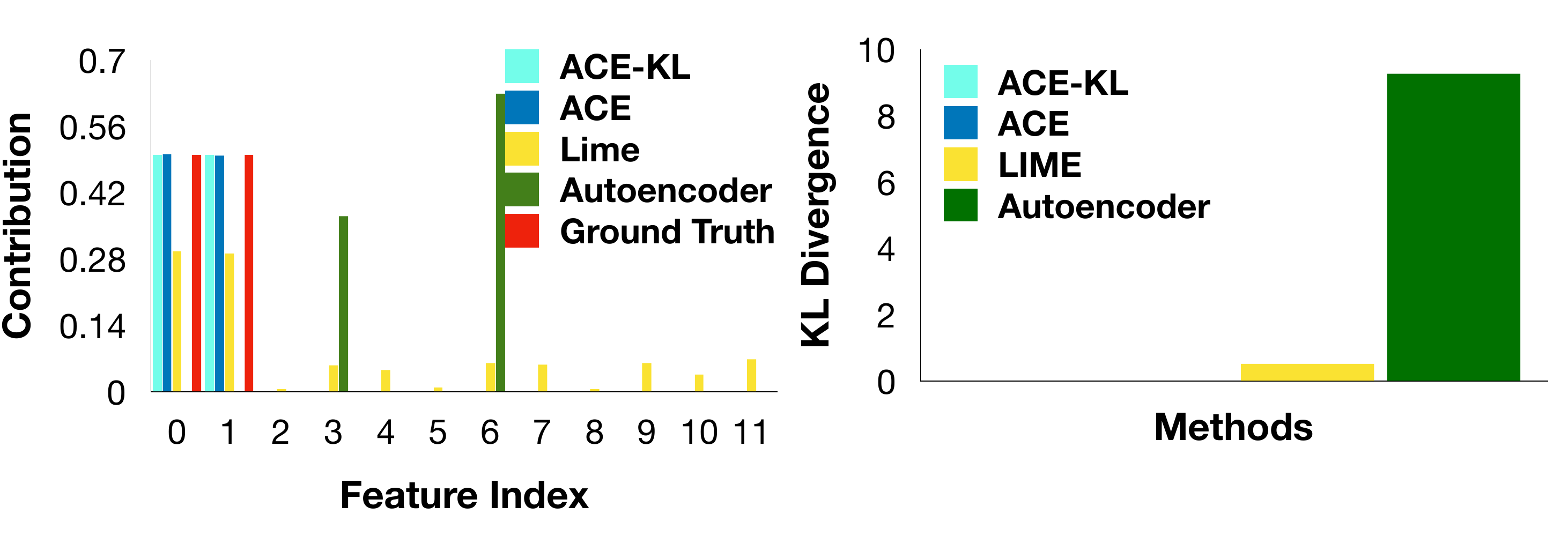}
        \caption{Features 0, 1 are perturbed.}
    \end{subfigure}
    \begin{subfigure}[b]{0.48\textwidth}
        \includegraphics[width=\textwidth]{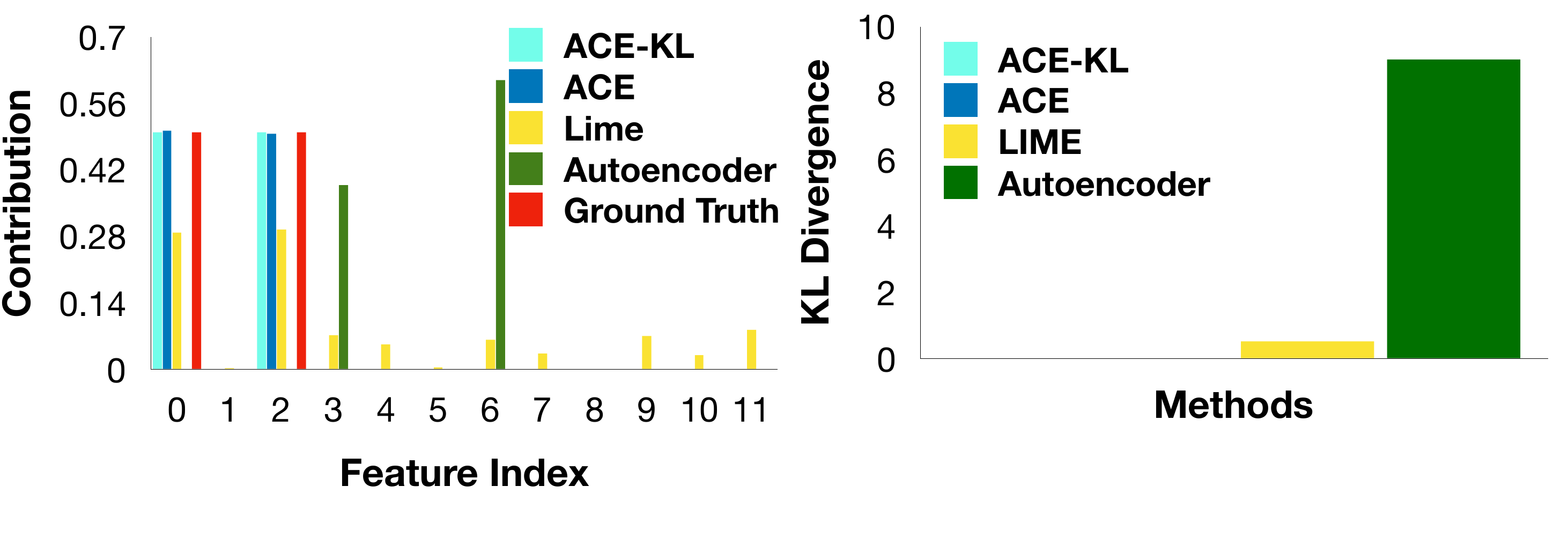}
        \caption{Features 0, 2 are perturbed.}
    \end{subfigure}
    \begin{subfigure}[b]{0.48\textwidth}
        \includegraphics[width=\textwidth]{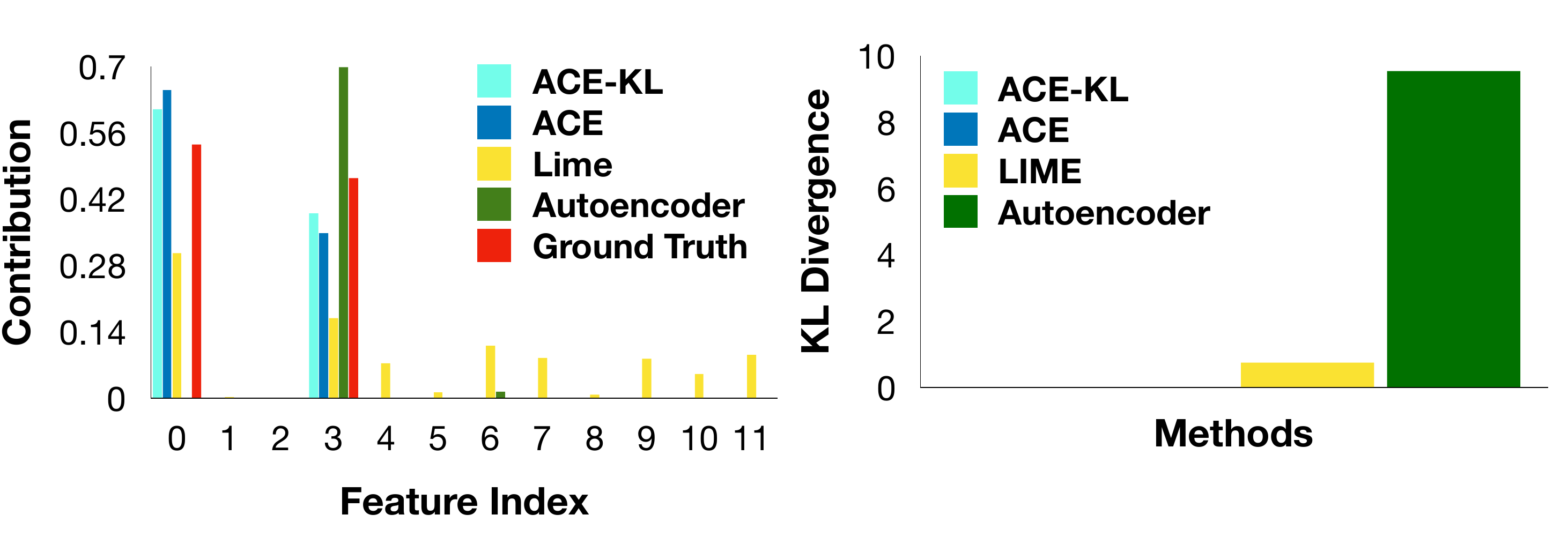}
        \caption{Features 0, 3 are perturbed.}
    \end{subfigure}
    \begin{subfigure}[b]{0.48\textwidth}
        \includegraphics[width=\textwidth]{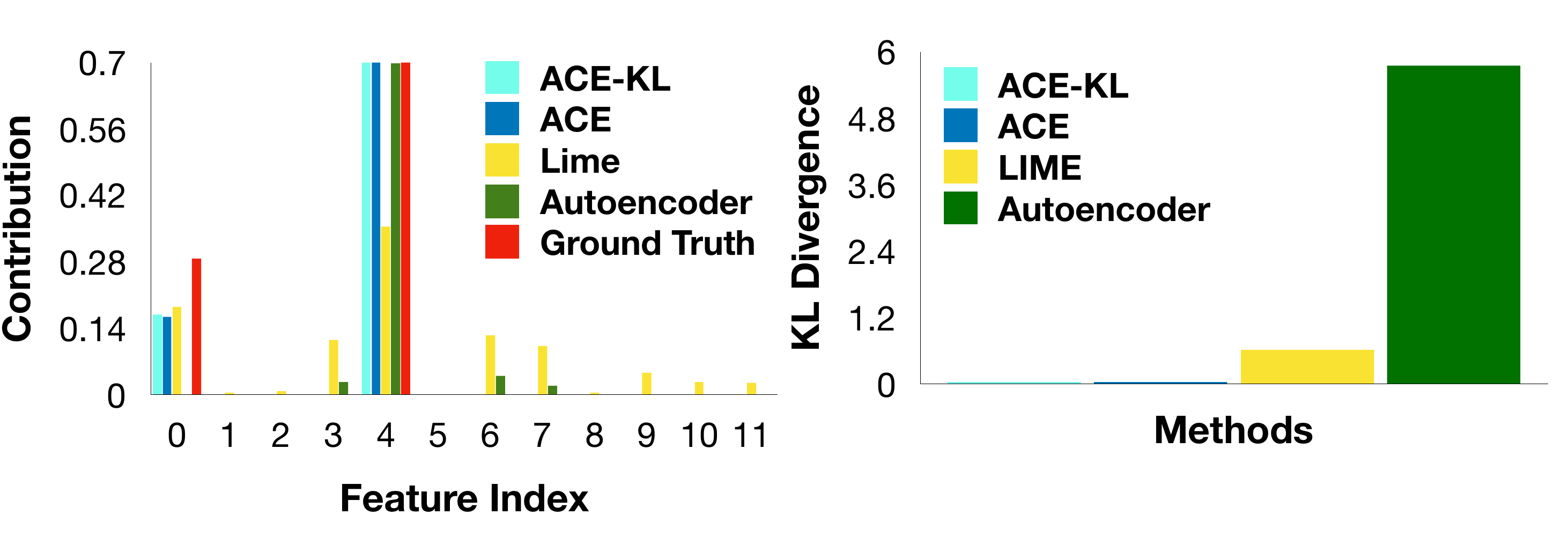}
        \caption{Features 0, 4 are perturbed.}
    \end{subfigure}
    \caption{Feature contribution calculated by different methods on 4 synthetic examples, where each of them has two features perturbed. The left side are the contributions of each feature calculated using different method, and the right side are the KL-divergence for each method.}
\label{fig:twopert}
\vspace{-1em}
\end{figure}

\end{document}